\DeclareMathOperator*{\argmax}{arg\,max}
\DeclareMathOperator*{\argmin}{arg\,min}
\newtheorem{theorem}{Theorem}
\newtheorem{lemma}{Lemma}
\newtheorem{remark}{Remark}
\newtheorem{definition}{Definition}
\newcommand\af{\alpha_d}
\newcommand\bt{\beta_d}
\newcommand\ad{\log \left( \frac{\alpha_d (1-\beta_d)}{\beta_d (1-\alpha_d)} \right)}
\newcommand\bb{\log \left( \frac{1-\theta}{\theta} \right)}
\newcommand\HG{\{ HG_d \}_{d=2}^W}
\newenvironment{proof}{{\noindent\it Proof:}\quad}{\hfill $\square$\par}
\title{Matrix Completion with Hypergraphs: Sharp Thresholds and Efficient Algorithms}
\author[Z. Ma et al.]{%
Zhongtian Ma\\
Northwestern Polytechnical University \\
\email{mazhongtian@mail.nwpu.edu.cn}\And
Qiaosheng Zhang\\
Shanghai Artificial Intelligence Laboratory\\
\email{zhangqiaosheng@pjlab.org.cn}\And
Zhen Wang\thanks{Corresponding author.}\\
Northwestern Polytechnical University\\
\email{w-zhen@nwpu.edu.cn}
}
\begin{document}

\maketitle

\begin{abstract}
This paper considers the problem of completing a rating matrix based on sub-sampled matrix entries as well as observed social graphs and hypergraphs. We show that there exists a \emph{sharp threshold} on the sample probability for the task of exactly completing the rating matrix---the task is achievable when the sample probability is above the threshold, and is impossible otherwise---demonstrating a phase transition phenomenon. The threshold can be expressed as a function of the ``quality'' of hypergraphs, enabling us to  \emph{quantify} the amount of reduction in sample probability due to the exploitation of hypergraphs. This also highlights the usefulness of hypergraphs in the matrix completion problem. En route to discovering the sharp threshold, we develop a computationally efficient matrix completion algorithm that effectively exploits the observed graphs and hypergraphs. Theoretical analyses show that our algorithm succeeds with high probability as long as the sample probability exceeds the aforementioned threshold, and this theoretical result is further validated by synthetic experiments.  Moreover, our experiments on a real social network dataset (with both graphs and hypergraphs) show
that our algorithm outperforms other state-of-the-art matrix completion algorithms.\footnote{The source code for this article is available on \url{https://github.com/mztmzt/MCH_log}.}
\end{abstract}

\section{Introduction}\label{sec:intro}
Recommender systems are becoming increasingly popular as they provide personalized and tailored recommendations to users based on their preferences, interests, and actions~\citep{lu2012recommender, ricci2015recommender}. Relevant applications include online shopping, social media, and search engines~\citep{sivapalan2014recommender, guy2011social}. A commonly-used and well-known technique for recommender systems is \emph{low-rank matrix completion}, which aims to fill in missing values in a user-item matrix given the partially observed entries~\citep{keshavan2010matrix, ramlatchan2018survey}. To enhance the performance of recommender systems and to tackle the \emph{cold start problem} (i.e., recommending items to a new user who has not rated any items)~\citep{lika2014facing}, \emph{social network information} has widely been incorporated in many modern algorithms \citep{camacho2018social, sedhain2014social, zhao2015connecting}. 

Despite the impressive performance achieved by these algorithms, there has been a lack of theoretical insights into the usefulness of social network information in recommender systems, leaving the maximum possible gain due to social networks unknown. Recently, some works tried to address the aforementioned challenges from an information-theoretic perspective \citep{ahn2018binary, jo2021discrete, zhang2021community, zhang2022mc2g, elmahdy2020matrix, elmahdy2022optimal, suh2021use}, through investigating a matrix completion problem that consists of social graphs.
Ref.~\cite{ahn2018binary} theoretically revealed, for the first time, the gain due to social graphs by characterizing the minimum sample probability required for matrix completion. The follow-up works \citep{zhang2021community,zhang2022mc2g} further considered matrix completion with both social and item-similarity graphs, and \cite{elmahdy2020matrix} considered a more complicated scenario where social graphs with hierarchical structure is available. 

In addition to graph information, \emph{hypergraph} information is another type of information that is prevalent in social networks and is becoming an increasingly important resource for recommender systems. Hypergraphs, as the generalization of graphs, can capture  \emph{high-order relationships} among users, which better reflect the complex interactions of users in real scenarios~\citep{de2020social}. For example, friendships between users in social networks can be captured by graphs, but chat groups (as high-order relationships among group users) are usually represented by hyperedges in hypergraphs. While some prior works~\citep{zheng2018novel, li2013link, zhao2018learning} have leveraged hypergraph information (as part of the social network) into recommender systems and experimental evidences therein demonstrated the effectiveness, theoretical understandings of the benefit of hypergraph information are still lacking. This raises the question of interest in this paper:

\begin{center}
\emph{How much can the performance of recommender systems be improved by \\exploiting hypergraph information in social networks?}
\end{center}

To answer this question, we consider an abstraction of real recommender systems---a matrix completion problem that consists of a sub-sampled rating matrix as well as observed social graphs and hypergraphs. Our approach to quantify the gain due to hypergraph information is via investigating the interplay between  the ``quality'' of hypergraphs and the minimum sample probability required for the matrix completion task (to be detailed in Sections~\ref{sce:MCH_Guarantee} and~\ref{sec:lower_bound}).

As a first attempt to theoretically analyze matrix completion with hypergraphs, we consider a setting with $n$ users, $m$ items, and an $n \times m$  rating matrix. Each entry of the matrix is either $+1$ (like) or $-1$
(dislike). To reflect real scenarios where users are often clustered, we assume users are partitioned into $K$ disjoint clusters (where $K \ge 2$). Motivated by the \emph{homophily} phenomenon~\citep{mcpherson2001birds} in social sciences, we assume users in the same cluster have the same ratings over items. The learner observes three pieces of information: (i) a sub-sampled rating matrix with each entry being sampled  with sample probability $p$, and then potentially being flipped with probability $\theta$ to account for potential noise, (ii) a social graph generated via a celebrated random graph model with planted clusters---the stochastic block model (SBM)~\citep{holland1983stochastic}, and (iii) social hypergraphs generated via the hypergraph stochastic block model (HSBM)~\citep{kim2018stochastic}. The task is to achieve \emph{exact matrix completion} (i.e., complete the sub-sampled matrix without any error) using the observations. A more detailed description of our setting is provided in Section~\ref{sec:problem_state}, and a pictorial representation is presented in Figure~\ref{HG}.

\begin{figure}
  \centering
  \includegraphics[width=0.9\textwidth]{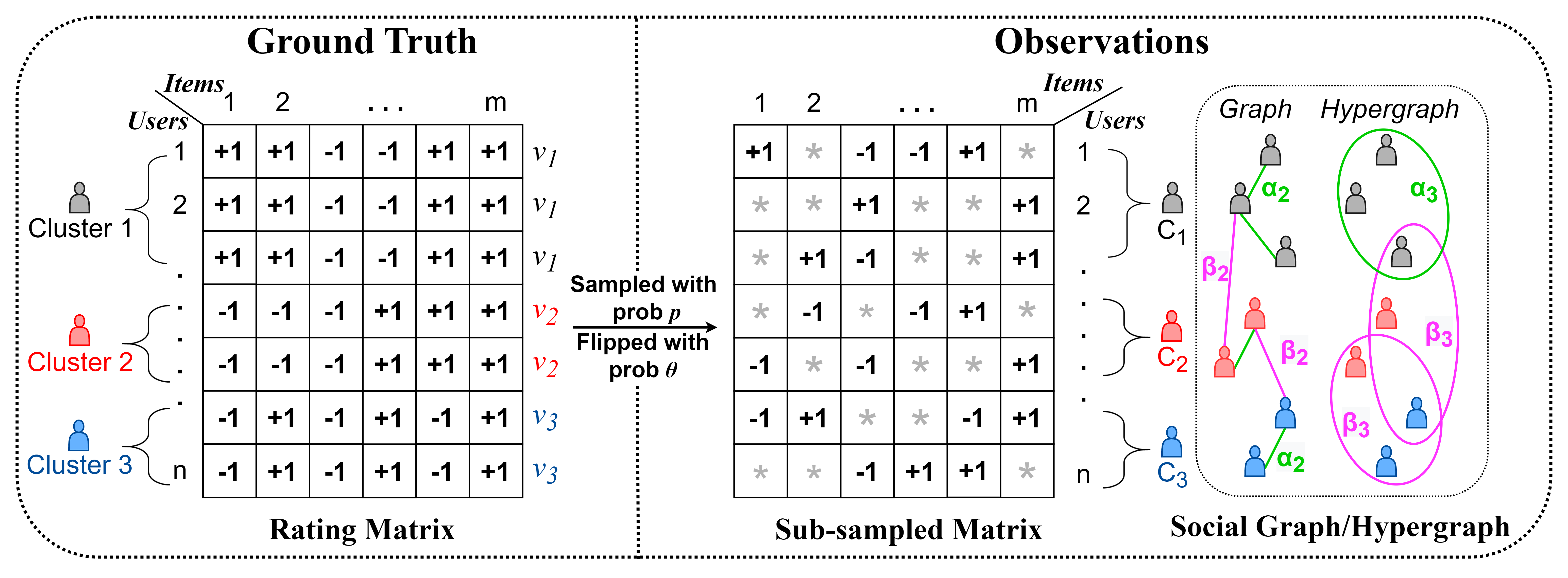}
  \caption{\small An illustration of the considered matrix completion problem. The goal is to exactly recover the rating matrix by exploiting the sub-sampled matrix, as well as the observed social graph and hypergraphs.}\label{HG}

\end{figure}

\paragraph{Main Contributions.} Our contributions are three-fold. 

First, we develop a computationally efficient matrix completion algorithm, named MCH (\underline{M}atrix \underline{C}ompletion with \underline{H}ypergraphs), that operates in three stages and can effectively leverage both social graphs and hypergraphs. It first adopt a \emph{spectral clustering method} on the social graph and hypergraphs to coarsely estimate the user clusters, then estimate users' ratings based on the observed sub-sampled rating matrix, and finally refine both the clusters and users' ratings in an iterative manner. Under the \emph{symmetric setting} wherein the $K$ clusters are of equal sizes (described in Section~\ref{sce:MCH_Guarantee}), we show that MCH achieves exact matrix completion with high probability as long as the sample probability exceeds a certain threshold presented in Theorem~\ref{The1}.

Second, we provide an \emph{information-theoretic lower bound} on the sample probability for the aforementioned matrix completion task (see Theorem~\ref{The2}). Under the symmetric setting, it matches the threshold in Theorem~\ref{The1}, thus showing that there exists a \emph{sharp threshold} on the value of the sample probability. This also demonstrates the optimality of our algorithm MCH in terms of the sample efficiency. Notably, the sharp threshold is a function of the ``quality'' of hypergraphs, by which one can quantify the gain due to hypergraph information in the matrix completion task. This gain is analyzed in detail in Section~\ref{sec:lower_bound}.


Third, we perform extensive experiments on synthetic datasets, and the results of these experiments further validate the theoretical guarantee of MCH. We then compare  MCH with other matrix completion algorithms on a semi-real dataset, which consists of a real social network with hypergraphs (the \emph{contact-high-school dataset} \citep{chodrow2021hypergraph, Mastrandrea-2015-contact}) and a synthetic rating matrix. Experimental results demonstrate the superior performance of MCH over other state-of-the-art algorithms.

\vspace{-4pt}
\paragraph{Related Works.} Many recommender systems have successfully used social network information, often relying on pairwise user relationships represented as graphs~\citep{eirinaki2018recommender, ma2008sorec}. However, real-world user interactions often involve high-order relationships that simple graphs can't capture. To better utilize social networks, recent studies have focused on hypergraphs~\cite{bu2010music, zheng2018novel}. For instance, ref.~\cite{bu2010music} used hypergraphs in a music recommender system, showing promising results. Additionally, some deep learning methods have integrated hypergraphs into graph neural networks to embed social network information~\citep{xia2022hypergraph, wei2022dynamic, yang2022multi, xia2022self}. Despite their success, the theoretical understanding of hypergraph benefits is still limited.

Recently, there has been a line of research devoted to quantifying the benefit of graph information in recommender systems, by analyzing a specific generative model for matrix completion.
Ref.~\cite{ahn2018binary} first proposed a matrix completion model in which a social graph (generated via the SBM) is available for exploitation, and revealed the gain due to graph information by characterizing the optimal sample probability for matrix completion.\footnote{As part of our work is inspired by ~\cite{ahn2018binary}, we provide a more detailed comparison in Appendix~\ref{compare}.} Ref.~\cite{zhang2021community, zhang2022mc2g} considered a more general scenario in which both the social and item graphs are observable, and designed a matrix completion algorithm that can fully utilize the information in the social and item graphs. Ref.~\cite{elmahdy2020matrix} showed that exploiting the hierarchical structure of social graphs yields a substantial gain for matrix completion compared to the work by~\citep{ahn2018binary}. These works are closely related to our work, but none of them has paid attention to the importance of hypergraph information in recommender systems. Moreover, due to the complicated structure of hypergraphs, theoretical analyses with respect to hypergraphs are arguably more challenging.  

 Our work is also closely related to \emph{community detection}, as achieving matrix completion in our problem requires detecting the communities/clusters of users based on the observed social graphs/hypergraphs. For graphs that are generated via the SBM (as assumed in this work), it has been shown~\citep{abbe2015exact,mossel2015consistency} that there exists a sharp   threshold for exact recovery of clusters. Similarly, the threshold for exact recovery of clusters in the HSBM has also been established~\citep{kim2018stochastic,zhang2022exact}. Moreover, our problem is also related to community detection with side-information~\citep{saad2018community, esmaeili2019exact, sima2021exact}, since the rating matrix in this work can be viewed as a special form of side-information for detecting the communities/clusters. 

\vspace{-4pt}

\paragraph{Notations.} For any positive integer $a$, let $[a] \triangleq \{1, 2, \ldots, a\}$. We use standard \emph{asymptotic notations}, including $O(.)$, $o(.)$, $\Omega(.)$, $\omega(.)$, and $\Theta(.)$, to describe the limiting behaviour of functions/sequences~\cite[Chapter~3.1]{leiserson1994introduction}. For an event $E$, we use $\mathbbm{1}\{E\}$ to denote the \emph{indicator function} that outputs $1$ if $\mathcal{E}$ is true and outputs $0$ otherwise.

\vspace{-4pt}
\section{Problem Statement}\label{sec:problem_state}
\vspace{-4pt}

\paragraph{Model.} Consider a rating matrix consisting of $n$ users and $m$ items. We assume users' ratings to items are either $+1$ or $-1$, which reflect ``like'' and ``dislike'' respectively. As observed in social science literature, people in real life are often clustered~\citep{borgatti2009network}, and people in the same cluster tend to have similar preferences (called \emph{homophily}~\citep{mcpherson2001birds}). To reflect these observations and to make the model as concise as possible, we assume the $n$ users are partitioned into $K$ disjoint clusters (where $K \ge 2$), and users in the same cluster have the same ratings to items.  To be concrete: 
\begin{itemize}[wide, labelwidth=!, labelindent=0pt]
    \item The $K$ clusters are denoted by $\{\mathcal{C}_1, \mathcal{C}_2, \ldots, \mathcal{C}_K\}$, where $\mathcal{C}_k \subset [n]$. These clusters are disjoint, i.e., for any $k_1,k_2 \in [K]$ such that $k_1 \ne k_2$, we have $\mathcal{C}_{k_1} \cap \mathcal{C}_{k_2}=\emptyset$. Moreover, $\cup_{k \in [K]} \mathcal{C}_k = [n]$.
    
    \item For users belonging to cluster $\mathcal{C}_k$ (where $k \in [K]$), their ratings to the $m$ items are represented by a length-$m$ vector $v_k \in \{+1,-1\}^m$, which is called the \emph{nominal rating vector} of cluster $\mathcal{C}_k$.
    
    \item We denote the \emph{rating matrix} to be completed as $R \in \{+1, -1\}^{n \times m}$, where the entry $R_{ij}$ represents user $i$'s rating of item $j$. Each row of $R$ is chosen from the set of nominal rating vectors $\{v_k\}_{k=1}^K$, depending on the cluster to which the corresponding user belongs. Specifically, if user $i$ belongs to cluster $\mathcal{C}_k$, then the $i$-th row of $R$ equals $v_k$.

\end{itemize} 

\paragraph{Observations.} Three types of observations, as illustrated in  Figure~\ref{HG}, are available: (i) a sub-sampled matrix $U$; (ii) a social graph $G$, and (iii) a collection of social hypergraphs $\{HG_d \}_{d=3}^{W}$, where $HG_d$ is a $d$-uniform hypergraph and $d$ is an integer satisfying $3 \le d \le W$. To be concrete:

1) The sub-sampled matrix $U\in \{+1,-1,*\}^{n \times m}$ is sampled from the rating matrix $R$, where the symbol $*$ represents entries that are not sampled. Specifically, each entry of the rating matrix $R$ is sampled, independently of the others, with a \emph{sample probability} $p \in [0,1]$. We also allow the presence of \emph{noise} during the sampling process, by assuming each sampled entry in $U$ may be flipped from the corresponding entry in $R$  with probability $\theta \in [0,1/2)$. 
Letting $\theta = 0$ leads to the noiseless setting. Therefore, $U_{ij}$ equals $R_{ij}$ with probability $p(1-\theta)$, equals $-R_{ij}$ with probability $p\theta$, and equals $*$ with probability $1-p$. 

2) The social graph $ G = (\mathcal{V}, \mathcal{E})$ is generated by the SBM, with $\mathcal{V} = [n]$ being the set of users and $\mathcal{E}$ being the set of edges. Let $\mathcal{E}'$ be the set that comprises all possible edges over $\mathcal{V}$, where $|\mathcal{E}'| = \binom{n}{2}$. For each $e \in \mathcal{E}'$, the pair of users connected by $e$ is denoted by $ \{ v_{e}^1, v_{e}^2 \} $, and the probability of $e$ appearing in the edge set $\mathcal{E}$ of the social graph $G$ follows the rule:
\begin{align*}
   \mathbb{P}(e \in \mathcal{E}) = \begin{cases}
   \alpha_2, \text{ \ \ \ if \ } v_{e}^1 \text{\ and  } v_{e}^2 \text{ belong to a same cluster},\\  
   \beta_2, \text{ \  \ \ otherwise}.
   \end{cases}
\end{align*}

3) Each hypergraph $HG_d = (\mathcal{V}, \mathcal{H}_d, d) $ is generated by the $d$-uniform HSBM, with $\mathcal{H}_d$ being the set of hyperedges and $d$ being the number of users in each hyperedge. Let $\mathcal{H}'_d$ be the set that comprises all possible subsets of $\mathcal{V}$ with cardinality $d$, where $|\mathcal{H}'_d| = \binom{n}{d}$. 
For each $h \in \mathcal{H}'_d$, we denote the corresponding $d$ users as $\{v_{h}^i\}_{i=1}^{d}$, and the probability of $h$ appearing in $\mathcal{H}_d$ follows the rule:
\begin{align*}
   \mathbb{P}(h \in \mathcal{H}_d) = \begin{cases}
   \alpha_d, \text{ \ \ \ if all the users in } \{v_{h}^i\}_{i=1}^d \text{ belong to a same cluster},\\  
   \beta_d, \text{ \  \ \ otherwise}.
   \end{cases}
\end{align*}


\begin{remark}
    Note that a graph can be regarded as a special $d$-uniform hypergraph with $d = 2$. Thus, we use $G$ and $HG_2$ interchangeably to represent the social graph. The aggregated graph and hypergraph information $(G, \{HG_d\}_{d=3}^W)$ can also be simplified  as $\{HG_d\}_{d=2}^W$ for brevity.
\end{remark}

\paragraph{Objectives.} Based on the sub-sampled matrix $U$, the observed graph $G$ (or $HG_2$), and the hypergraphs $\{HG_d \}_{d=3}^{W}$, the learner aims to use an estimator/algorithm $\psi = \psi(U,\{HG_d \}_{d=2}^{W})$ to achieve exact matrix completion, i.e., to exactly recover the matrix $R$ without any error. 

\section{MCH: An Efficient Matrix Completion Algorithm}\label{sec:MCH}

In this section, we introduce an efficient algorithm, named MCH, that can effectively exploit social graphs and hypergraphs to complete the rating matrix. It takes the sub-sampled rating matrix $U$, the aggregated graph and hypergraphs $\{ HG_d \}_{d=2}^{W}$ and hyperparameters $\{c_d\}_{d=2}^{W}$ as input, and outputs an estimated rating matrix $\widetilde{R} \in \{+1, -1\}^{n \times m}$ as the estimate of the ground truth matrix $R$. 

\renewcommand{\algorithmicrequire}{\textbf{Input}}
\renewcommand{\algorithmicensure}{\textbf{Output}}
\begin{algorithm}\label{alg1}
	\caption{MCH} 
	\begin{algorithmic}
		\REQUIRE Sub-sampled matrix $U$, Hypergraphs $\{HG_d \}_{d=2}^W$, Hyperparameters $\{c_d\}_{d=2}^{W}$
		\STATE \textbf{Stage 1: Partial recovery of clusters}
		\STATE Calculate the weighted adjacency matrix $A = \sum_{d=2}^W \frac{1}{d} H_d H_d^T$ based on  $\{HG_d \}_{d=2}^W$;
            \STATE Apply spectral clustering on $A$ to obtain initial estimates of clusters $\{\mathcal{C}^{(0)}_k\}_{k \in [K]}$;
		\STATE \textbf{Stage 2: Estimating rating vectors}
		\FOR{cluster $k=1$ to $K$}
		\STATE Obtain the estimated rating vector $v'_k$ via \emph{majority rule};
		\ENDFOR
		\STATE \textbf{Stage 3: Local refinements of clusters}
		\FOR{iteration $t=1$ to $T$}
		\FOR{user $i=1$ to $n$}
		\STATE $  k^* = \argmax_{k \in [K]}	 n \cdot \sum_{d=2}^{W} c_d \cdot h_d (\{i\}, \mathcal{C}_k^{(t-1)})/|\mathcal{C}_{k}^{(t-1)}|  + |\Lambda_{i}(v'_k)|$;
            \STATE Declare $i \in \mathcal{C}_{k^*}^{(t)}$;
		\ENDFOR
		\ENDFOR
		\ENSURE Estimated rating matrix $\widetilde{R}$ such that the $i$-th row equals $v'_k$ whenever user $i \in \mathcal{C}_k^{(T)}$.
	\end{algorithmic}
\end{algorithm}

\paragraph{Algorithm Description.} 
Our algorithm consists of three stages: Stage 1 aims to partially recover the user clusters using the aggregated graph and hypergraphs, Stage 2  estimates the nominal rating vectors $\{v_k\}_{k \in [K]}$ of all the clusters based on the sub-sampled matrix $U$, and Stage 3 follows an iterative procedure to refine the clusters and finally outputs an estimated matrix.   For two sets of users $\mathcal{S}_1,\mathcal{S}_2 \subseteq [n]$, we define $h_d(\mathcal{S}_1, \mathcal{S}_2)$ as the number of hyperedges that cross $\mathcal{S}_1$ and $\mathcal{S}_2$ in hypergraph $HG_d$ (i.e., the number of hyperedges that contain at least one  node in $\mathcal{S}_1$ and at least one node in $\mathcal{S}_2$).


\emph{Stage 1 (Partial recovery of clusters):}
We use the incidence matrix $H_d \in \{0, 1\}^{n \times |\mathcal{H}_d|}$ to represent the hypergraph $HG_d = (\mathcal{V}, \mathcal{H}_d, d)$, where the $(i, j)$-entry of $H_d$ equals $1$ if user $i$ belongs to the $j$-th hyperedge $h_j \in \mathcal{H}_d$, and equals $0$ otherwise. We then compute the \emph{weighted adjacency matrix} $A \triangleq \sum_{d=2}^W \frac{1}{d} H_d H_d^T$ based on $\{HG_d\}_{d=2}^W$,
where $H_d H_d^T$ is an $n \times n$ matrix with its $(i_1,i_2)$-entry representing the number of hyperedges in $HG_d$ that contain both users $i_1$ and $i_2$. We employ a \emph{spectral clustering method} (e.g., the Spectral Partition algorithm in~\cite{yun2016optimal}) on the weighted adjacency matrix $A$ to obtain an initial estimate of the $K$ clusters,  denoted by  $\{ \mathcal{C}_1^{(0)}, \mathcal{C}_2^{(0)}, \ldots, \mathcal{C}_K^{(0)}\}$. 



\emph{Stage 2 (Estimate rating vectors):} 
We estimate the nominal rating vectors based on the estimated clusters $\{ \mathcal{C}_k^{(0)}\}_{k \in [K]}$ as well as the observed ratings in the sub-sampled rating matrix $U$, based on a \emph{majority rule}. Specifically, let $\mathcal{U} \triangleq \{(i,j) \in [n] \times [m]: U_{ij} \ne * \}$ be the set of indices corresponding to the sub-sampled entries in $U$, and $v'_k \in \{+1,-1\}^m$ be the estimated rating vector of cluster $\mathcal{C}_k$. For each item $j \in [m]$, we set the value of $v'_k(j)$ to be the rating that is given by the majority of users in $\mathcal{C}_k^{(0)}$ to item $j$. Formally, $v'_k(j) = \argmax_{u\in\{+1,-1\}} \sum_{i \in \mathcal{C}_k^{(0)}} \mathbbm{1}\{U_{ij} = u\}$.

\emph{Stage 3 (Local refinement of clusters):}
In this stage we iteratively refine the user clusters using the sub-sampled rating matrix $U$, aggregated graph and hypergraphs $\{ HG_d \}_{d=2}^{W}$, and the estimated rating vectors $\{v'_k\}_{k \in [K]}$. This process operates over $T$ iterations, with each iteration building upon the output of the previous one. The outputs at the end of iteration $t$ (where $t \in [T]$) are denoted by $\{\mathcal{C}_k^{(t)}\}_{k \in [K]}$. At the $t$-th iteration, we reclassify each user $i \in [n]$ into cluster $\mathcal{C}_{k^*}^{(t)}$, where 
\begin{equation}\label{eq1}
 k^* = \argmax_{k \in [K]}	 \frac{ n \sum_{d=2}^{W} c_d \cdot h_d (\{i\}, \mathcal{C}_k^{(t-1)})}{|\mathcal{C}_{k}^{(t-1)}|}  + |\Lambda_{i}(v'_k)|,
\end{equation}
and $\Lambda_{i}(v'_k) \triangleq \{j \in [m]: U_{ij} = v'_k(j) \}$ is the set of observed ratings of user $i$ that coincide with the estimated nominal rating vector $v'_k$ of cluster $k$. After $T$ iterations, the estimated user clusters are $\{\mathcal{C}_1^{(T)}, \mathcal{C}_2^{(T)}, \ldots, \mathcal{C}_K^{(T)} \}$, and  MCH outputs the estimated rating matrix $\widetilde{R} \in \{+1,-1\}^{n \times m}$ such that the $i$-th row equals $v'_k$ whenever user $i$ belongs to $\mathcal{C}_k^{(T)}$.

\begin{remark}
In the symmetric setting to be introduced in Section~\ref{sce:MCH_Guarantee}, we provide the optimal values for the hyperparameters $\{c_d\}_{d=2}^W$, and also show that setting the number of iterations $T = O(\log n)$ is sufficient for exact recovery of the user clusters as well as the rating matrix.
\end{remark}

\paragraph{Computational Complexity.} Stage 1 runs in $O(n^2 \sum_{d=2}^W |\mathcal{H}_d|)$ time for computing the weighted adjacency matrix $A$ as well as running the spectral clustering method in~\citep{yun2016optimal}. Stage 2 runs in $O(|\mathcal{U}|)$ time, where $|\mathcal{U}|$ concentrates around $mnp$ with high probability. Stage 3 runs in $O(|\mathcal{U}|T + \sum_{d=2}^W |\mathcal{H}_d| T )$ time. Therefore, the overall complexity of MCH is $O( |\mathcal{U}|T + (n^2 + T)\sum_{d=2}^W |\mathcal{H}_d| )$.

\section{Theoretical Guarantees of MCH}\label{sce:MCH_Guarantee}
This section provides theoretical guarantees for our algorithm MCH under a specific \emph{symmetric setting}, which, on top of the model described in Section~\ref{sec:problem_state}, further requires that the $K$ disjoint clusters $\mathcal{C}_1, \mathcal{C}_2, \ldots, \mathcal{C}_K$ are of equal sizes.

\emph{\textbf{The symmetric setting}: Assume the size of each cluster $\mathcal{C}_k$ (where $k \in [K]$) satisfies $|\mathcal{C}_k| = n/K$.  }

The additional assumption is frequently applicable in practical scenarios, such as in a school setting where the $K$ clusters can be considered as $K$ classes with an equal number of students. Such a symmetric assumption has also been adopted in a number of related works in the context of matrix completion~\citep{ahn2018binary, jo2021discrete, zhang2022mc2g} and community detection~\citep{abbe2015exact, kim2018stochastic}. 
Moreover, we focus on the \emph{logarithmic average degree regime} for each hypergraph $HG_d$ where the edge generation probability $\alpha_d$ and $\beta_d$ scale as $\Theta(\log n / \binom{n-1}{d-1})$, since the gain of hypergraphs in this regime is significant and can also be precisely quantified (as demonstrated in Theorem~1 below).\footnote{The logarithmic average degree regime, where each node has an expected degree of $\Theta(\log n)$, is of particular interest in the community detection literature because the threshold for exact recovery of clusters in the hypergraph SBM falls into this regime~\citep{kim2018stochastic,zhang2022exact}.}

Before presenting the theoretical result, we first introduce the parameter $\gamma$ that quantifies the minimal \emph{Hamming distance} between pairs of nominal rating vectors $(v_{i}, v_{j})$. Formally, we have $\min_{i,j\in [K]: i \ne j}\Vert v_{i} - v_{j} \Vert_0 = \lceil \gamma m \rceil$, where $\Vert v_{i} - v_{j} \Vert_0$ is the \emph{$l_0$-norm} that counts the number of different elements in vectors $v_{i}$ and $v_{j}$, and $\lceil \gamma m \rceil$ is the smallest integer that is greater than or equal to $\gamma m$. As we shall see, the parameter $\gamma$ plays a key role in
characterizing the performance of our algorithm. We then denote the set of rating matrices that satisfy  $\min_{i,j\in [K]: i \ne j}\Vert v_i - v_j \Vert_0 = \lceil \gamma m \rceil$ by $\mathcal{R}^{(\gamma)}$.
For any estimator $\psi$, we introduce the notion of \emph{worst-case error probability} $P_{\mathrm{err}}^{(\gamma)}(\psi)$ as a \emph{metric} that measures the performance of $\psi$ when the rating matrix $R$ is from the set $\mathcal{R}^{(\gamma)}$.

\begin{definition}
{For any estimator $\psi$, the worst-case error probability with respect to $\mathcal{R}^{(\gamma)}$ is defined as 
\begin{equation}\label{eq2}
    P_{\mathrm{err}}^{(\gamma)}(\psi) \triangleq \max_{X \in \mathcal{R}^{(\gamma)}} \mathbb{P}(\psi(U, \{ HG_d \}_{d=2}^W) \neq R \ | \ R=X ),
\end{equation}
where $\mathbb{P}(\psi(U, \{ HG_d \}_{d=2}^W) \neq R \ | \ R=X )$ represents the probability (over the randomness in the generation of graph/hypergraphs, the sampling process and noise) that exact matrix completion is not achieved (i.e., the rating matrix is not exactly recovered)  when the rating matrix $R$ equals $X$.
}
\end{definition}

We are now ready to provide a sufficient condition on the sample probability $p$ that guarantees MCH to exactly recover the rating matrix $R$ (with high probability) in the symmetric setting.

\begin{theorem}\label{The1}
    Assume\footnote{The assumption that the sizes of users and items satisfy $m=\omega(\log n)$ and $m = o(e^n)$ avoids extreme cases wherein the rating matrix $R$ is excessively ``tall'' or ``fat''. This is only a mild assumption that arises from technical considerations, and is suitable for most practical scenarios.} $m=\omega(\log n)$ and $m=o(e^n)$. For any $\epsilon>0$, if sample probability $p$ satisfies
    \begin{equation}\label{eq3}
        p \geq \max \left\{ \frac{(1+\epsilon)\log n - \sum_{d=2}^{W} \frac{\binom{n-1}{d-1}}{K^{d-1}} (\sqrt{\alpha_d}-\sqrt{\beta_d})^2}{(\sqrt{1-\theta} - \sqrt{\theta})^2 \gamma m}, \frac{ (1+\epsilon) K \log m}{(\sqrt{1-\theta} - \sqrt{\theta})^2 n} \right\}, 
    \end{equation}
    then MCH ensures $\lim_{n \rightarrow \infty}P_{\mathrm{err}}^{(\gamma)} = 0$ (or equivalently, exactly recovers the rating matrix with probability approaching one), by setting $T = O(\log n)$ and $c_d =  \log \left( \frac{\alpha_d(1-\beta_d)}{\beta_d(1-\alpha_d)} \right) / \left( K \log \left( \frac{1-\theta}{\theta} \right) \right).$

\end{theorem}

\begin{proof}
    Due to the space limitation, we provide the proof in the supplementary material.
\end{proof}


While in Theorem~\ref{The1} the knowledge of the model parameters $\theta$ and $\{\alpha_d, \beta_d\}_{d=2}^{W}$ is required to determine the values of hyperparameters $\{c_d\}_{d=2}^W$, we point out that such knowledge is \emph{not necessary} since they can be estimated on-the-fly via the following expressions:
\begin{equation}\label{eq4}
    \begin{aligned}
         &\alpha'_d \triangleq \frac{ \sum_{k\in [K]} h_d(\mathcal{C}_k^{(0)}, \mathcal{C}_k^{(0)})}{ K\binom{n/K}{d}}, \quad  \beta'_d \triangleq \frac{ |\mathcal{H}_d| - \sum_{k\in [K]} h_d(\mathcal{C}_k^{(0)}, \mathcal{C}_k^{(0)})}{ \binom{n}{d} - K \binom{n/K}{d}} , \quad  \theta' \triangleq 1 -\frac{|\Lambda_{R^{(0)}}|}{|\mathcal{U}|}, 
    \end{aligned}
\end{equation}
where $R^{(0)} \in \{+1,-1\}^{n\times m}$ is the matrix such that its $i$-row equals $v'_k$ whenever $i\in \mathcal{C}_k^{(0)}$, and the set $\Lambda_{R^{(0)}} \triangleq \{ (i,j)\in \mathcal{U} : U_{ij}=(R^{(0)})_{ij} \}$ represents the collection of indices where the sub-sampled entries in $U$ coincide with the corresponding entries in $R^{(0)}$. As proved in the supplementary material, the theoretical guarantee of MCH (shown in Theorem~1) remains  valid if we replace $(\theta, \{\alpha_d\}, \{\beta_d \} )$ by $(\theta', \{\alpha'_d\}, \{\beta'_d \} )$, as long as the additional assumption $m = O(n)$ is satisfied (which means the number of items should not be much larger than the number of users).

\section{An Information-theoretic Lower Bound and The Sharp Threshold}\label{sec:lower_bound}

In this section, we provide an \emph{information-theoretic lower bound} on the sample probability $p$ for the symmetric setting (i.e., when the $K$ clusters are of equal sizes), which serves as the fundamental performance limit of \emph{any} algorithm in the considered matrix completion problem. 

\begin{theorem}\label{The2}
Assume $m=\omega(\log n)$ and $m=o(e^n)$. For any $\epsilon > 0$, if sample probability $p$ satisfies   
    \begin{equation}\label{eq5}
        p \leq \max \left\{ \frac{(1-\epsilon)\log n - \sum_{d=2}^{W} \frac{\binom{n-1}{d-1}}{K^{d-1}} (\sqrt{\alpha_d}-\sqrt{\beta_d})^2}{(\sqrt{1-\theta} - \sqrt{\theta})^2 \gamma m}, \frac{ (1-\epsilon) K \log m}{(\sqrt{1-\theta} - \sqrt{\theta})^2 n} \right\}, 
    \end{equation}
    then $\lim_{n \rightarrow \infty }P_{\mathrm{err}}^{(\gamma)}(\psi) \neq 0$ for any estimator $\psi$ under the  symmetric setting.
\end{theorem}

\begin{proof}
    Due to the space limitation, we provide the proof in the supplementary material.
\end{proof}

The information-theoretic lower bound states that any algorithm/estimator $\psi$ must fail to guarantee $\lim_{n \rightarrow \infty }P_{\mathrm{err}}^{(\gamma)}(\psi) = 0$ if the sample probability $p$ is smaller than the right-hand side of Eqn.~\eqref{eq5}, yielding a \emph{necessary condition} for exactly recovering the rating matrix. Comparing Theorems~1 and~2, we note that the sufficient condition for MCH to succeed matches the necessary condition (by letting $\epsilon \to 0$). This implies that under the symmetric setting:
\begin{enumerate}[wide, labelwidth=!, labelindent=0pt]
    \item The proposed algorithm MCH is \emph{optimal} in terms of the sample efficiency.
    \item There exists a \emph{sharp threshold} $p^*$ on the sample probability such that exact recovery of matrix $R$ is possible if and only if 
\begin{equation}\label{eq6}
    p > p^* \triangleq  \max \left\{ \frac{\log n - \sum_{d=2}^{W} \frac{\binom{n-1}{d-1}}{K^{d-1}} (\sqrt{\alpha_d}-\sqrt{\beta_d})^2}{(\sqrt{1-\theta} - \sqrt{\theta})^2 \gamma m}, \frac{ K \log m}{(\sqrt{1-\theta} - \sqrt{\theta})^2 n} \right\}.
\end{equation} 
Here, $p^*$ is referred to as the \emph{optimal sample probability} for exact recovery of the rating matrix. Below, we provide some remarks on the expression of  $p^*$.
\end{enumerate}

\begin{itemize}[wide, labelwidth=!, labelindent=0pt]
  
\item The first term of Eqn.~\eqref{eq6}, roughly speaking, is the threshold for recovering the $K$ user clusters, while the second term  is the threshold for recovering the nominal rating vectors $\{v_k\}_{k \in [K]}$. When the sample probability $p$ is greater than both terms, one can recover both the user clusters and the nominal rating vectors exactly, thus yielding the exact matrix completion of the rating matrix $R$. Otherwise, it is impossible to exactly recover either the clusters or the nominal rating vectors, leading to a failure of exact matrix completion.

\item When the noise parameter $\theta \in [0,1/2)$, the term $(\sqrt{1-\theta} - \sqrt{\theta})^{-2}$ is an increasing function of $\theta$, meaning that a larger sample probability is needed when the sampling process is noisier.

\item The optimal sample probability $p^*$ is a decreasing function of $\gamma$ (the parameter that quantifies the minimal pairwise distance between nominal rating vectors), which makes intuitive sense because a larger value of $\gamma$ means that different clusters are more separable, making it easier for recovering clusters as well as recovering the rating matrix.   
\item In addition to the optimal sample probability $p^*$, one can also define the \emph{optimal sample complexity} as  $ nmp^* = (\sqrt{1-\theta} - \sqrt{\theta})^{-2} \max \{\gamma^{-1} n (\log n - \sum_{d=2}^{W} \binom{n-1}{d-1} k^{1-d} (\sqrt{\alpha_d}-\sqrt{\beta_d})^2 ), k m \log m \}$, which corresponds to the minimum expected number of sampled entries required for achieving exact matrix completion. A direct implication is that, for the considered problem, it suffices to sample $\Theta(\max\{n\log n, m \log m\})$ matrix entries to achieve exact matrix completion.  
\end{itemize}

\paragraph{The gain of social graph and hypergraphs.} For notational convenience, we define $I_d \triangleq (\sqrt{\alpha_d}-\sqrt{\beta_d})^2$, where $2 \le d \le W$, as a measure of the ``quality'' of the hypergraph $HG_d$.\footnote{Intuitively, a small value of $I_d$ means that the difference between the probability of generating hyperedges that contain users in the same cluster and the probability of generating hyperedges that contain users in different clusters is small, making it hard to distinguish the clusters. On the contrary, the clusters are easier to be distinguished/recovered if $I_d$ is large. For HSBMs with $K$ equal-sized clusters, a recent result~\citep{zhang2022exact} states that it is possible to exactly recover the $K$ clusters when $I_d > k^{d-1} (\log n)/\binom{n-1}{d-1}$, and is impossible otherwise.} We further define  $I_h \triangleq \sum_{d=2}^{W} \binom{n-1}{d-1} k^{1-d} I_d$ as the weighted sum of the qualities of the social graph and hypergraphs. From the expression of $p^*$ in Eqn.~\eqref{eq6}, we note that:

\begin{itemize}[wide, labelwidth=!, labelindent=0pt]
    \item When $I_h = o(\log n)$, the contribution of exploiting the social graph and hypergraphs is negligible.
    \item When $I_h = \Omega(\log n)$ and $I_h < \log n -k \gamma n^{-1} m \log m$, exploiting the social graph and hypergraphs helps to reduce the optimal sample probability $p^*$ by $I_h(\sqrt{1-\theta} - \sqrt{\theta})^{-2}(\gamma m)^{-1}$. When $I_h \ge \log n -k \gamma n^{-1} m \log m$, the gain due to the graph and hypergraphs \emph{saturates} (i.e., the maximum gain is achieved), since the second term in Eqn.~\eqref{eq6} becomes the dominant term.\footnote{Recall that the second term in Eqn.~\eqref{eq6} represents the minimal number of samples required for recovering the nominal rating vectors, thus increasing the quality of graph or hypergraphs will be no longer helpful.} Thus, the maximum gain due to the social graph and hypergraphs is $g^* \triangleq (\sqrt{1-\theta} - \sqrt{\theta})^{-2} (\frac{\log n}{\gamma m} - \frac{k \log m}{n})$.
\end{itemize}

\begin{figure}
    \centering

    \begin{subfigure}[b]{0.48\linewidth}
        \centering
        \includegraphics[width=0.8\linewidth]{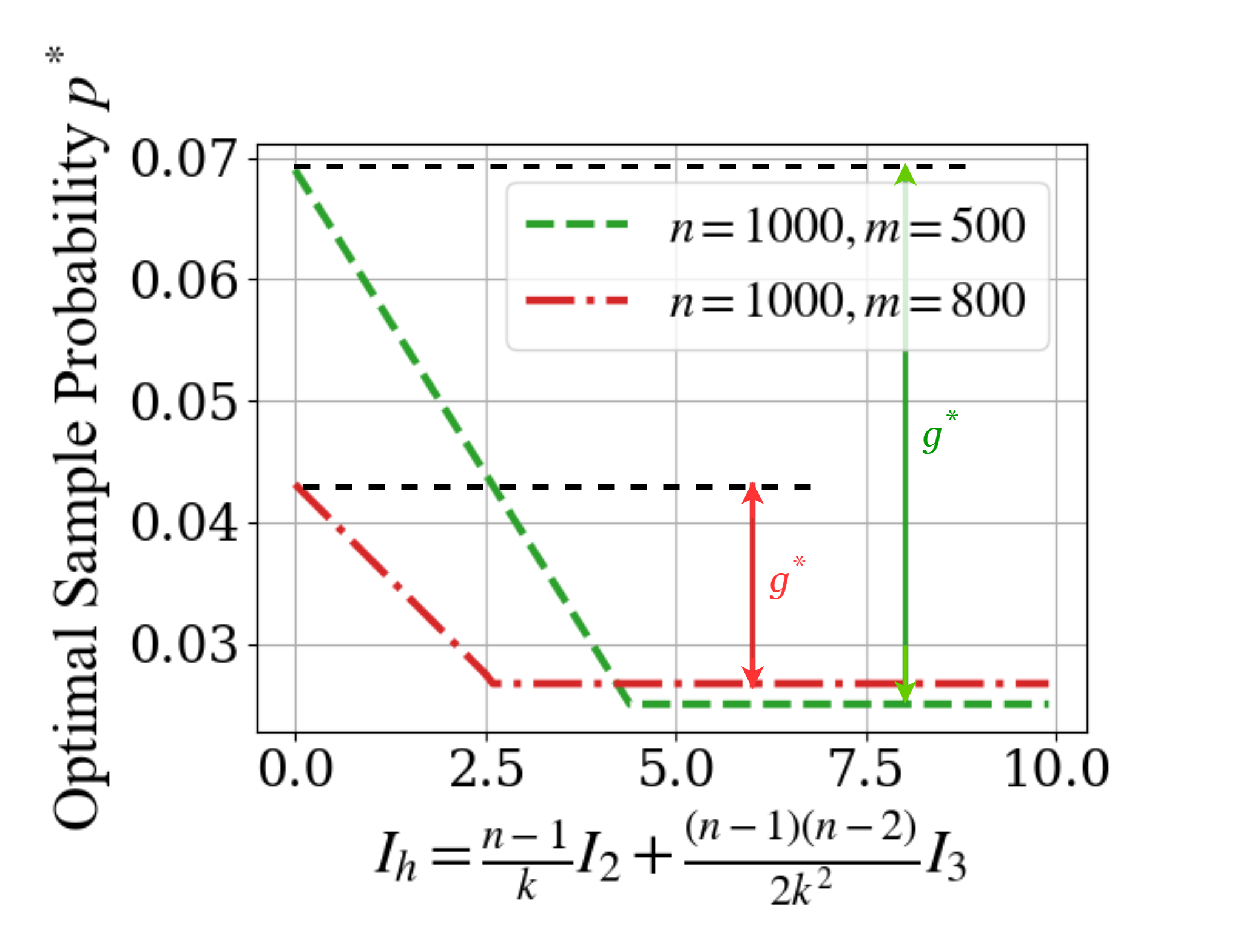}
        \caption{Optimal sample probability $p^*$ versus $I_h$.}
        \label{T1}
    \end{subfigure}
    \hfill
    \begin{subfigure}[b]{0.48\linewidth}
        \centering\includegraphics[width=0.77\linewidth]{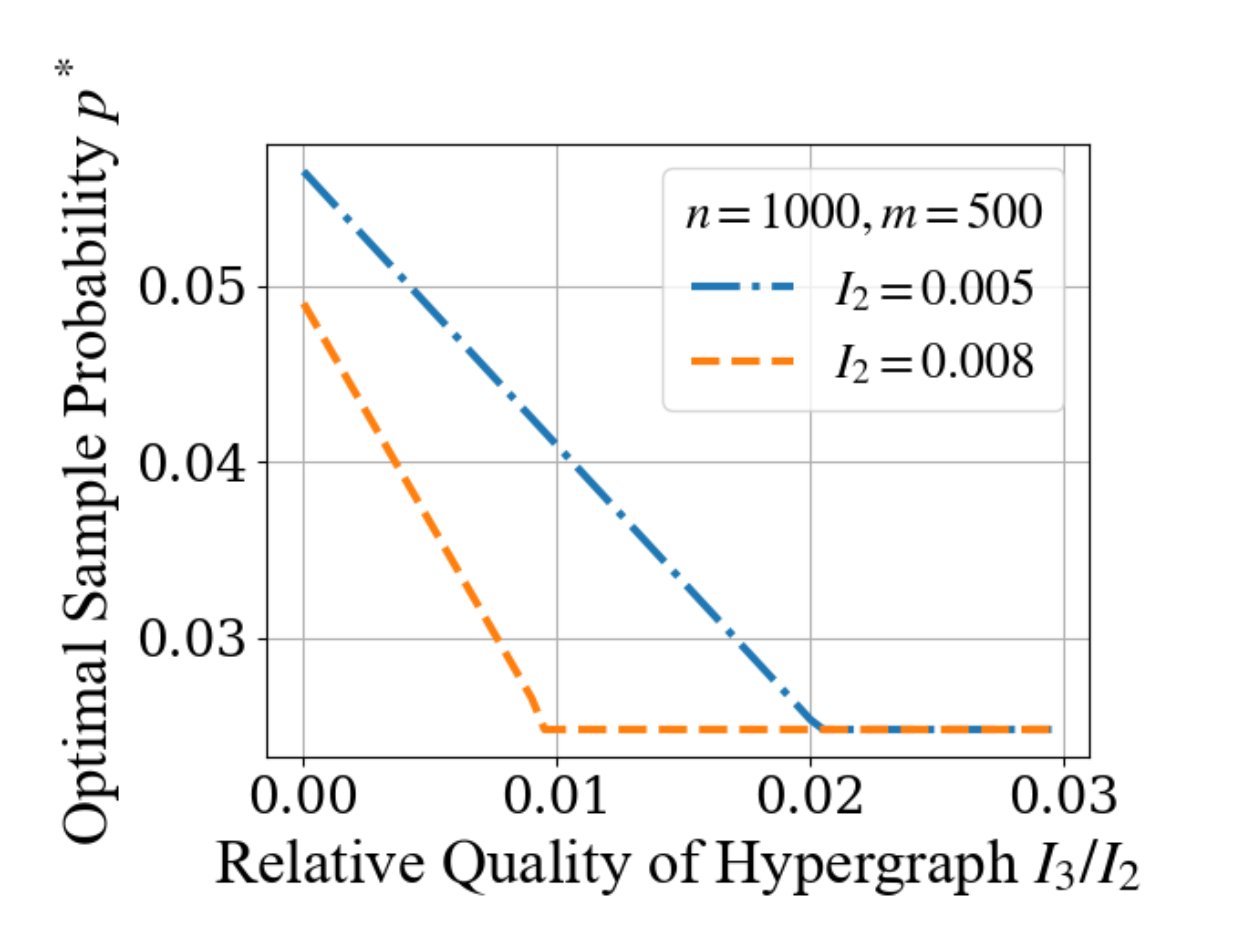}
        \caption{Optimal sample probability $p^*$ versus the ``relative quality'' of hypergraphs (measured by $I_3/I_2$).}
        \label{T2}
    
    \end{subfigure}

    \caption{\small Consider a setting that contains $K = 4$ clusters, a social graph $HG_2$, and a $3$-uniform hypergraph $HG_3$. Let $\gamma=0.2$ and $\theta=0$. Figure~\ref{T1} visualizes the gain due to $HG_2$ and $HG_3$ in terms of reducing the optimal sample probability $p^*$, where  $g^*$ represents the maximum possible gain. Figure~\ref{T2} shows the extra gain due to exploiting the hypergraph $HG_3$ for fixed values of the graph quality $I_2$. Note that $I_3/I_2=0$ means that no hypergraph information is available, corresponding to the setting considered in \citep{ahn2018binary}.}
    \label{F1}
    \vspace{-8pt}
\end{figure}

In Figure~\ref{T1}, we illustrate the amount of reduction in the optimal sample probability $p^*$ for different values of $I_h$, under a setting that contains $K=4$ equal-sized clusters, a social graph $HG_2$, and a $3$-uniform hypergraph $HG_3$. It is clear from Figure~\ref{T1} that, for both the parameter settings $(n,m) = (1000,500)$ and $(n,m) = (1000,800)$,  the optimal sample probability $p^*$ first decreases linearly with $I_h$, and then stays constant after $I_h$ exceeding $\log n -k \gamma n^{-1} m \log m$.   Comparing the red and green lines in Figure~\ref{T1}, we note that a larger relative value of $n$ results in a larger maximum gain due to social graph and hypergraphs (which is represented by $g^*$ in the figure).

\paragraph{The additional gain of exploiting hypergraphs.} Compared to the prior work~\citep{ahn2018binary} that only utilizes graph information for matrix completion, our theoretical results show that exploiting additional social hypergraphs leads to an \emph{extra gain} of $\sum_{d=3}^{W} \binom{n-1}{d-1} k^{1-d} I_d(\sqrt{1-\theta}-\sqrt{\theta})^{-2} (\gamma m)^{-1}$ in terms of reducing the optimal sample probability. This gain becomes more significant as the ``relative  quality'' of hypergraphs (over the quality of the graph) improves. In Figure~\ref{T2}, we plot the optimal sample probability $p^*$ as a function of the relative quality of hypergraphs (measured by the ratio of $I_3$ to $I_2$), assuming there is only a single hypergraph $HG_3$, and the value of graph quality $I_2$ is fixed.

\section{Experimental Results}\label{sec:experiments}

\begin{figure*}[ht]
	\centering
	\subfloat[\small Synthetic data ($n= 3m$)]{\includegraphics[width=0.3\linewidth]{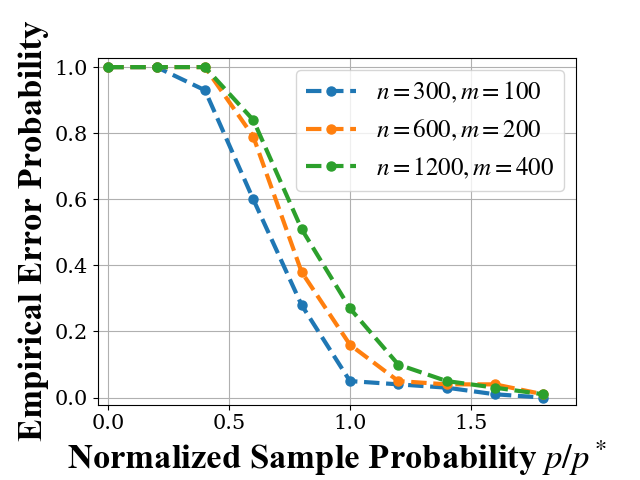}%
		\label{E1}}
	\hfill
	\subfloat[\small Synthetic data ($m= 3n$)]{\includegraphics[width=0.3\linewidth]{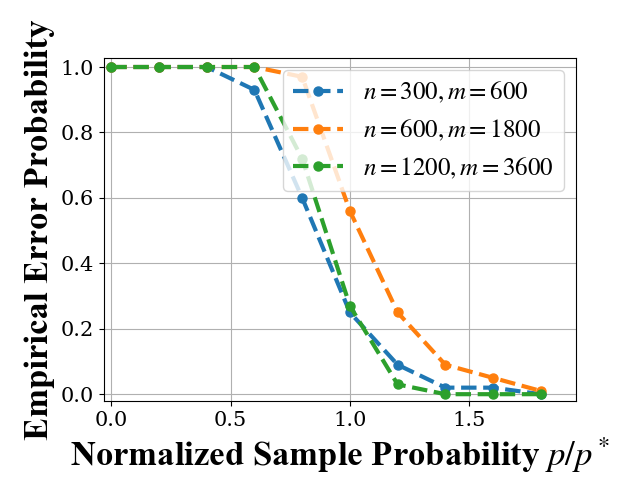}%
		\label{E2}}
        \hfill
	\subfloat[\small Synthetic data (different $\hat{I}_3$)]       
        {\includegraphics[width=0.3\linewidth]{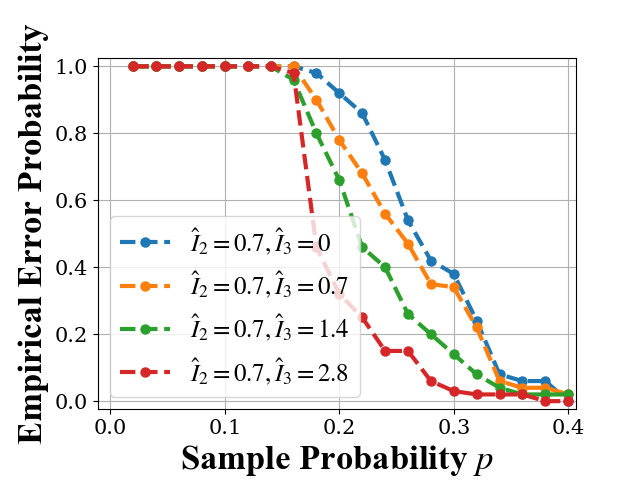}%
		\label{E3}}
        \hfill
	\subfloat[\small Semi-real data (performance comparison)]      
        {\includegraphics[width=0.35\linewidth]{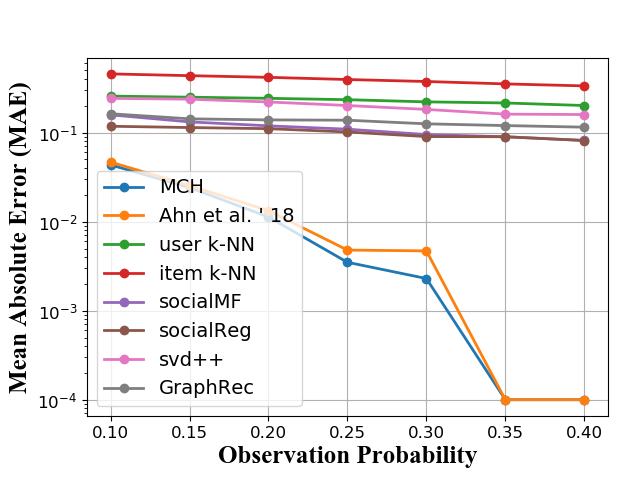}%
		\label{R1}}
        \hspace{10pt}
	\subfloat[\small Semi-real data from modified social networks (fix $p = 0.1$)]{\includegraphics[width=0.35\linewidth]{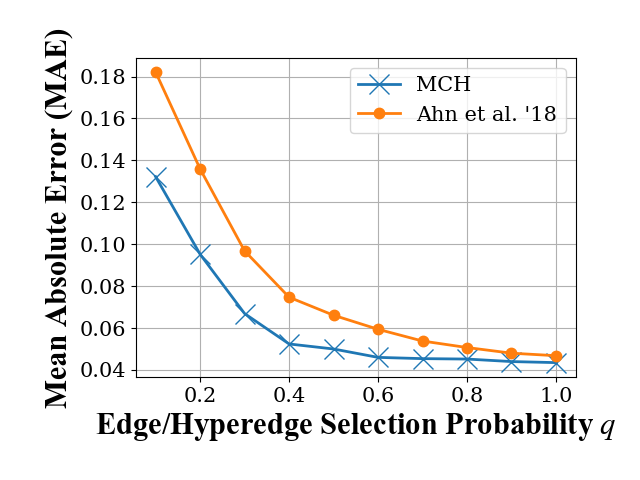}%
		\label{R2}}
	\caption{\small Experimental results on synthetic and semi-real datasets show  the superior performance of MCH.}
	\label{fig_sim}
\end{figure*}

\paragraph{Experiments on synthetic datasets.} We first conduct experiments on synthetic datasets (generated according to the model in Section~\ref{sec:problem_state}) to validate the theoretical guarantee of MCH provided in Theorem~\ref{The1}. In Figures~\ref{E1} and~\ref{E2}, we consider a setting that contains $K=3$ equal-sized clusters, a graph of quality $I_2 = \log n/n$, and a $3$-uniform hypergraph of quality $I_3 =2 \log n/\binom{n-1}{2}$. We set the noise parameter $\theta = 0.1$ and $\gamma=0.4$. We plot the \emph{empirical error probability} (defined as the fraction of the trials where exact matrix completion is not achieved out of $100$ trials) as a function of the \emph{normalized sample probability} (defined as the ratio of the sample probability $p$ to the optimal sample probability $p^*$).  It is clear that the empirical error probability tends to zero when the normalized sample probability exceeds one (i.e., when the sample probability exceeds $p^*$), and is bounded away from zero otherwise.  This indicates a strong agreement with our theory. 

In Figure~\ref{E3}, we consider a different synthetic dataset with $n=300$, $m=100$, $I_2=\hat{I}_2 \log n/ n$ and $I_3 = \hat{I}_3 \log n/\binom{n-1}{2}$ for multiple different values of $\hat{I}_3$, in order to examine the extra gain due to hypergraphs with different qualities. Comparing the four lines in Figure~\ref{E3}, it is evident that utilizing hypergraph information helps to reduce the error probability, and the amount of reduction becomes more significant as the quality of the hypergraph improves.


\paragraph{Experiments on a semi-real dataset.} We also evaluate the performance of MCH on a semi-real dataset that consists of a real social network where the interactions between users are captured by both graph and  hypergraphs. The social network, named \emph{contact-high-school dataset} \citep{chodrow2021hypergraph, Mastrandrea-2015-contact}, comprises of $327$ student users that belong to $9$ disjoint classes, with the size of each class ranging from $29$ to $44$. It contains $5,498$ ordinary edges and $2,320$ hyperedges, where the size of each hyperedge ranging from $3$ to $5$.
Building upon the contact-high-school social network, we then synthesize a rating matrix with $m = 90$ items and $9$ nominal rating vectors with minimal fractional Hamming distance $\gamma = 0.22$. We set the noise parameter $\theta = 0.1$ in the sampling process. 

In Figure~\ref{R1}, we compare\footnote{The values of hyperparameters $\{c_d \}_{d=2}^5$ in our algorithm MCH are all set to $0.01$.} MCH with several representative matrix completion algorithms, including  user k-NN, item k-NN, svd++~\cite{koren2008factorization}, SocialMF~\cite{jamali2010matrix}, SocialReg~\cite{ma2011recommender}, GraphRec\footnote{The GraphRec employs a batch size of $64$, conducts training for $10$ epochs, adopts a learning rate of $0.001$, and employs an embedding dimension of $32$.}~\cite{fan2019graph}, and the spectral clustering-based algorithm that only utilizes graphs (by Ahn et al.~\cite{ahn2018binary}). The performance is measured by the \emph{mean absolute error} (MAE) defined as $\sum_{i \in [n], j \in [m]} \mathbbm{1}\{\widetilde{R}_{ij} \ne R_{ij} \}/(mn)$. Figure~\ref{R1} shows that \emph{MCH outperforms all the competitors}. Note that the performance of the algorithm by Ahn et al.~\cite{ahn2018binary} approaches to ours, which is because the quality of the social graph in this real dataset is already high enough, so utilizing the graph information only (without the hypergraphs) also results in a good performance. To further demonstrate the superiority of MCH over the one by Ahn et al.~\cite{ahn2018binary}, we consider  modified contact-high-school datasets where each edge/hyperedge in the original dataset is selected (resp. discarded) with probability $q$ (resp. $1-q$). As depicted in Figure~\ref{R2}, as  $q$ decreases (i.e., as the quality of the graph decreases), the advantage of MCH becomes more pronounced.

\section*{Acknowledgments}
This work was supported by the National Natural Science Foundation of China (Nos. U22B2036, 11931015), the Fundamental Research Funds for the Central Universities (Nos. G2024WD0151, D5000240309) and the Tencent Foundation and XPLORER PRIZE. 

\medskip

\bibliographystyle{unsrtnat}
\bibliography{reference}

\appendix

\section{Appendix}

\textbf{Outline:} This technical appendix is organized as follows. In Section~\ref{section1}, we list some assumptions that are adopted in the proofs, and introduce the notations. Section~\ref{section2} provides some technical results that are useful in the subsequent analyses. In Section~\ref{section3}, we present a detailed proof of the theoretical guarantee of the proposed algorithm MCH (Theorem~1 in the main paper). Section~\ref{section4} presents the detailed proof of the information-theoretic lower bound (Theorem~2 in the main paper). The proofs of several lemmas are deferred to Section~\ref{section5}. Additional experiments are provided and explained in detail in Section~\ref{experiments}. Section~\ref{compare} presents a detailed comparison between this study and~\cite{ahn2018binary}.


\section{Preliminaries}\label{section1}
\
\paragraph{List of Underlying Assumptions.}
The proofs of Theorem~1 and Theorem~2 rely on several assumptions on the model parameters $(n, m, K, \theta, \gamma, \{ \alpha_d \}, \{ \beta_d \})$. We list them before proceeding with the formal proofs.
\begin{itemize}[wide, labelwidth=!, labelindent=10pt]
    \item Assume $m=\omega(\log n)$ and $m = o(e^n)$. This assumption avoids extreme cases wherein the rating matrix $R$ is excessively ``tall'' or ``fat''. This is only a mild assumption that arises from technical considerations, and is suitable for most practical scenarios.

    \item When the model parameters $(\theta, \{\alpha_d \}, \{\beta_d \})$ are not known \emph{a priori}, we further assume $m=O(n)$, so that we can reliably estimate $(\theta, \{\alpha_d\}, \{\beta_d \} )$ in the proposed algorithm MCH. If these parameters are known a priori, this assumption can be discarded.
    
    \item The parameters $K, \gamma$ and $\theta$ all scale as constants that do not grow with $n$ or $m$.

    \item For each hypergraph $HG_d$, we assume $\alpha_d > \beta_d$,  which reflects most practical scenarios in which users belonging to a same  cluster are more likely to be connected than users belonging to different clusters. Moreover, we assume $\alpha_d, \beta_d = \Theta((\log n)/\binom{n-1}{d-1})$ such that the average degree of each node scales as $\Theta(\log n)$. This corresponds to the \emph{logarithmic average degree regime} that is of particular interest in the community detection literature, since the threshold for exact recovery of clusters in the HSBM falls into this regime~\citep{kim2018stochastic,zhang2022exact}.
    
\end{itemize}

\paragraph{Notations and Abbreviations.} 
For any random variable $Z$, let  $\mathbb{M}_Z (t)$ be the \emph{moment-generating function} of $Z$. For any two sets $\mathcal{A}$ and $\mathcal{B}$, we use $\mathcal{A} \triangle \mathcal{B}$ to denote the \emph{symmetric difference} of the two sets, i.e., $ \mathcal{A} \triangle \mathcal{B} \triangleq (\mathcal{A} \setminus \mathcal{B}) \cup (\mathcal{B} \setminus \mathcal{A})$.

For notational convenience, we define $$a_d \triangleq \ad, \ \ b \triangleq \bb, \ \ I_d \triangleq (\sqrt{\alpha_d} - \sqrt{\beta_d})^2, \ \ I_{\theta} \triangleq (\sqrt{1-\theta} - \sqrt{\theta})^2.$$ 
These abbreviations are frequently used throughout the supplemental material.

\section{Maximum-likelihood Function and Large Deviations Bounds}\label{section2}
\paragraph{Maximum-likelihood Function:} We denote the ground truth rating matrix as $R \in \{+1,-1\}^{n \times m}$, the $K$ user clusters  as 
$\{ \mathcal{C}_k \}_{k \in [K]}$, and the corresponding nominal rating vectors as $\{ v_k \}_{k \in [K]}$. The observations include the collection of hypergraphs $\{HG_d \}_{d=2}^W$ as well as the sub-sampled rating matrix $U \in \{+1,-1, *\}^{n \times m}$. Given the observations, we first provide the expression of the \emph{log-likelihood function} for each matrix $X$ in Lemma~1 below. 

Recall that we use $\mathcal{R}^{(\gamma)}$ to denote the set of rating matrices that satisfy $\min_{i,j\in[K]: i \ne j} \Vert v_i - v_j\Vert_0 = \lceil \gamma m \rceil$.  For two sets of users $\mathcal{S}_1,\mathcal{S}_2 \subseteq [n]$, we define $h_d(\mathcal{S}_1, \mathcal{S}_2)$ as the number of hyperedges in which all the constituting users belong to $\mathcal{S}_1 \cup \mathcal{S}_2$, and at least one user belongs to $\mathcal{S}_1$ and at least  one user belongs to  $\mathcal{S}_2$.  
For any matrix $X \in \mathcal{R}^{(\gamma)}$, let $\{\mathcal{C}_k^X\}_{k \in [K]}$ be the $K$ clusters associated with matrix $X$. Note that $\sum_{k=1}^K h_d( \mathcal{C}^X_k,  \mathcal{C}^X_k)$ is the number of \emph{in-cluster hyperedges} (i.e., the hyperedges that contain users belonging to the same cluster) with respect to $\{\mathcal{C}_k^X\}_{k \in [K]}$  in the hypergraph $HG_d$.
\begin{lemma}\label{lemma1}
    The log-likelihood function of matrix $X$, denoted as $L(X)$, is given as
    \begin{equation}
        L(X) = \sum_{d=2}^{W} a_d \cdot \sum_{k=1}^K h_d( \mathcal{C}^X_k,  \mathcal{C}^X_k) + b |\Lambda_X| + C,
    \end{equation}
    where $\{a_d\}_{d=2}^W$ and $b$ are defined in Section~1, the set $\Lambda_X\triangleq \{ (i, j) \in \mathcal{U} :  U_{ij}=X_{ij} \}$, and $C$ is a constant that is independent of the choice of $X$.
\end{lemma}
\begin{proof}
    See Section~\ref{section5} for the detailed proof.
\end{proof}

\paragraph{Large deviations bounds:} Below, we provide two large deviations results (in Lemmas~\ref{lemma2} and \ref{lemma3}) that are crucial for the subsequent proofs.  Let $\{K_d\}_{d=2}^W$ and $L$ be positive integers, and we further introduce a set of random variables that will play a role in the analyses: 
$$\{A_{dj} \}_{j=1}^{K_d} \overset{\text{i.i.d}}{\sim} \text{Bern}(\alpha_d), \ \ \{B_{dj}\}_{j =1}^{K_d} \overset{\text{i.i.d}}{\sim} \text{Bern}(\beta_d), \ \ \{P_{i}\}_{i=1}^L \overset{\text{i.i.d}}{\sim} \text{Bern}(p) ,\ \  \{\Theta_i\}_{i=1}^L \overset{\text{i.i.d}}{\sim} \text{Bern}(\theta).$$


\begin{lemma}\label{lemma2}
For any $y \geq 0$,
\begin{equation}
\begin{aligned}
    &\mathbb{P} \Big( \sum_{d=2}^{W} a_d  \sum_{j=1}^{K_d} (B_{dj} - A_{dj}) + b \sum_{i=1}^{L}P_i(2\Theta_i - 1) \geq -y  \Big) \\
    &\qquad\qquad\qquad\leq \exp \Big\{-\frac{1}{2}y - \sum_{d=2}^{W}(1+o(1))K_d I_d - (1+o(1)) L p I_\theta \Big\}. 
    \end{aligned}
    \end{equation}
\end{lemma}
\begin{proof}
    The proof relies on the Chernoff bound, and is deferred to Section~\ref{section5}.
\end{proof}

\begin{lemma}\label{lemma3}
Assuming that $\max \left\{ \sqrt{\alpha_d \beta_d}K_d, pL \right\}=\omega(1)$. Then
\begin{equation}\label{eqn4}
    \begin{aligned}
&\mathbb{P} \left( \sum_{d=2}^{W} a_d  \sum_{j=1}^{K_d} (B_{dj} - A_{dj}) + b \sum_{i=1}^{L}P_i(2\Theta_i - 1) \geq 0  \right) \\
 &\qquad\qquad\qquad \geq \frac{1}{4}\exp \left\{ - \sum_{d=2}^{W}(1+o(1))K_d I_d - (1+o(1)) L p I_\theta \right\}.
    \end{aligned}
\end{equation}
\end{lemma}
\begin{proof}
    The proof is deferred to Section~\ref{section5}.
\end{proof}

\section{Proof of Theorem~1}\label{section3}
In this section, we prove that, by setting the number of iterations $T=O(\log n)$, MCH ensures the worst-case error probability $P_{\text{err}}^{(\gamma)}$ tends to zero as long as the sample probability $p$  satisfies
\begin{align}
p \geq \max \left\{ \frac{(1+\epsilon)\log n - \sum_{d=2}^{W} \frac{\binom{n-1}{d-1}}{K^{d-1}} (\sqrt{\alpha_d}-\sqrt{\beta_d})^2}{(\sqrt{1-\theta} - \sqrt{\theta})^2 \gamma m}, \frac{ (1+\epsilon) K \log m}{(\sqrt{1-\theta} - \sqrt{\theta})^2 n} \right\}. \label{eq:thm1}
\end{align}
Using the abbreviations $I_d$ and $I_{\theta}$, the condition in~\eqref{eq:thm1} is equivalent to the following:
\begin{equation}
    \sum_{d=2}^W \frac{\binom{n-1}{d-1}}{K^{d-1}} I_d + \gamma m p I_{\theta} \geq (1+\epsilon) \log n \text{ \ and \ } \frac{1}{K} n p I_{\theta} \geq (1+\epsilon) \log m.
\end{equation}

\subsection{Analysis of Stage 1: Partial Recovery of Clusters}

First note that for each hypergraph $HG_d$, by assumption, the average degree of each node in $HG_d$ scales as $\Theta(\log n)$. Applying the \emph{Spectral Partition algorithm}~\cite{yun2016optimal}  to the weighted adjacency matrix $A$, and by a simple generalization of the proof techniques in~\cite{yun2016optimal} (for the SBM) and~\cite{zhang2022exact} (for the HSBM), one can show that when the ``quality'' of each  graph/hypergraph satisfies $I_d  = \omega(1/n^{d-1})$, the estimated clusters $\{ \mathcal{C}_1^{(0)}, \ldots, \mathcal{C}_K^{(0)} \}$ coincide with the true clusters $\{ \mathcal{C}_1, \ldots, \mathcal{C}_K \}$ except for a vanishing fraction of nodes. Formally, we define  $$\eta_k \triangleq \frac{|\mathcal{C}_k^{(0)} \setminus \mathcal{C}_k|}{n}$$ 
as the fraction of nodes that are misclassified to $\mathcal{C}_k^{(0)}$, and we have that with probability $1- o(1)$, $\eta_k = o(1)$ for all $k \in [K]$.


\subsection{Analysis of Stage 2: Exact recovery of Rating Vectors}
We now estimate the probability of failing to exactly recover the nominal rating vector $v_k$ (for each $k \in [K]$). First of all, we consider each item $j\in [m]$ separately, and  calculate the probability $\mathbb{P} (v'_k(j) \neq v_k(j))$ corresponding to the event that the estimated rating $v'_k(j)$ is not equal to the ground truth rating $v_k(j)$. Without loss of generality, we assume $v_k(j)=+1$, and by the estimation rule of Stage~2, we have
\begin{align}
   \mathbb{P} (v'_k(j) \neq v_k(j)) =  \mathbb{P} \left( \sum_{i \in \mathcal{C}^{(0)}_k  } U_{ij} \leq 0 \right) & = \mathbb{P} \left( \sum_{i \in \mathcal{C}^{(0)}_k \setminus \mathcal{C}_k } U_{ij} + \sum_{i \in \mathcal{C}^{(0)}_k \cap \mathcal{C}_k } U_{ij} \leq 0 \right) \\
    & \leq \mathbb{P} \left( \sum_{i=1}^{(\frac{1}{K} - \eta_k)n} P_i (1-2\Theta_i) - \sum_{i=1}^{\eta_k n} P'_i \leq 0 \right) \label{eqn8} \\
    & = \mathbb{P} \left( \sum_{i=1}^{(\frac{1}{K} - \eta_k)n} P_i (2\Theta_i - 1)  \geq -\sum_{i=1}^{\eta_k n} P'_i \right), \label{eqn9}
\end{align}
where $\{P_i\} \overset{\text{i.i.d}}{\sim} \text{Bern}(p)$, $\{\Theta_i\} \overset{\text{i.i.d}}{\sim} \text{Bern}(\theta) $, and $\{P'_i\} \overset{\text{i.i.d}}{\sim} \text{Bern}(p)$.  With a slight abuse of notations, we treat  $U_{ij} = *$ as $U_{ij} = 0$ when calculating $\sum_{i \in \mathcal{C}^{(0)}_k } U_{ij}$.   Eqn.~\eqref{eqn8} follows from the fact that $\sum_{i \in \mathcal{C}^{(0)}_k \setminus \mathcal{C}_k } U_{ij} \geq -\sum_{i=1}^{\eta_k n} P'_i$. The following Lemma gives a large deviation result of $\sum_{i=1}^{\eta_k n} P_i'$.

\begin{lemma}\label{lemma4}
	Suppose $Y \sim \text{Binom}(\tau n, p) $ where $0 < \tau < 1$ and $0<p<\frac{1}{2}$. Then for any $c > 2e$, 
	$$
	\mathbb{P}\left( Y \geq \frac{cnp}{\log \frac{1}{\tau}} \right) \leq 2 \exp \left( -\frac{cnp}{2} \right).
	$$ 
\end{lemma}

\begin{proof}
    See the proof in Section~\ref{section5}.
\end{proof}

According to Lemma~\ref{lemma4} and the fact $np=\Omega(\log m)$,  we have $\mathbb{P}\left( \sum_{i=1}^{\eta_k n} P_i' \geq \frac{cnp}{\log \frac{1}{\eta_k}} \right) \leq 2 \exp \left( -\frac{cnp}{2} \right)  = o(m^{-1})$. Then, Eqn.~\eqref{eqn9} is upper-bounded by
\begin{align*}
    & \mathbb{P} \left( \sum_{i=1}^{(\frac{1}{K} - \eta_k)n} P_i(2\Theta_i - 1) \geq -\frac{cnp}{\log \frac{1}{\eta_k}} \right) \cdot \mathbb{P}\left( \sum_{i=1}^{\eta_k n} P_i' \leq \frac{cnp}{\log \frac{1}{\eta_k}} \right) \\
    & + \mathbb{P} \left( \sum_{i=1}^{(\frac{1}{K} - \eta_k)n} P_i(2\Theta_i - 1) \geq -\sum_{i=1}^{\eta_k n} P_i' \right) \cdot \mathbb{P}\left( \sum_{i=1}^{\eta_k n} P_i' \geq \frac{cnp}{\log \frac{1}{\eta_k}} \right) \\
    & \leq \mathbb{P} \left( \sum_{i=1}^{(\frac{1}{K} - \eta_k)n} P_i(2\Theta_i - 1) \geq -\frac{cnp}{\log \frac{1}{\eta_k}} \right) + o(m^{-1}) \\
    & \overset{\text{(i)}}{\leq} \exp \left( \frac{1}{2} \log \left( \frac{1-\theta}{\theta} \right) \frac{c}{\log \frac{1}{\eta_k}} n p - (1+o(1)) \left( \frac{1}{K} - \eta_k  \right) n p I_\theta \right) + o(m^{-1}) \\
    & \overset{\text{(ii)}}{=} \exp \left( - (1+o(1)) \left( \frac{1}{K} - \eta_k \right) n p I_\theta + o(n p I_\theta) \right) + o(m^{-1}) \\
    & \overset{\text{(iii)}}{\leq} \exp \left( -(1+ \epsilon/2) \log m \right) +o(m^{-1}) = o(m^{-1}).
\end{align*}

Here, (i) follows from Lemma~\ref{lemma2} with $\{K_d\}_{d=2}^W = 0$, $L=(1/K - \eta_k)n$ and $y = -\frac{cnp}{\log \frac{1}{\eta_k}}$; (ii) holds since $np=\Theta(n I_\theta)$ and $\frac{c}{\log \frac{1}{\eta_k}} = o(1)$; (iii) is true due to $(1/K-\eta_k)n p I_\theta \geq (1+\epsilon/2)\log m$, which can be derived from the facts $(n p I_\theta)/K \geq (1+\epsilon) \log m$ and $\lim_{n \rightarrow \infty} \eta_k = 0$. Then, taking a union bound over all the $m$ items  and $K$ nominal rating vectors, we have $$\mathbb{P}(\exists k \in [K] \text{\ such that \ } v'_k \neq v_k ) = o(1).$$

\subsection{Analysis of Stage 3: Exact Recovery of Clusters}

As the nominal rating vectors $\{ v_k \}_{k \in [K]}$ can be exactly recovered with high probability after Stage~2, when analyzing Stage~3, we assume without loss of generality that the knowledge of $\{ v_k \}_{k \in [K]}$ is given.
According to Lemma~\ref{lemma1}, we obtain the \emph{local} log-likelihood function of user $i \in [n]$ belonging to cluster $\mathcal{C}_k$ as follows:
\begin{equation}\label{T1S31}
    L(i; \mathcal{C}_k) \triangleq \sum_{d=2}^{W} a_d \cdot h_d(\{i\}, \mathcal{C}_k) + b |\Lambda_i(v_k)| + C,
\end{equation}
where $C$ is a constant that is independent of $k$.
At the $t$-th iteration, the local refinement rule of MCH is to reclassify each user $i \in [n]$ to cluster $\mathcal{C}_{k^*}^{(t)}$, where 
\begin{equation}\label{T1S32}
    k^* =  \argmax_{k \in [K]}	  \sum_{d=2}^{W} \frac{a_d}{b} \cdot h_d (\{i\}, \mathcal{C}_k^{(t-1)})  + |\Lambda_{i}(v_k)|.
\end{equation}
Thus, the refinement rule in Eqn.~\eqref{T1S32} can be viewed as an approximation of the local log-likelihood function in Eqn.~\eqref{T1S31}, with each $\mathcal{C}^{(t-1)}_k$ being the estimate of the true cluster $\mathcal{C}_k$. Below, we introduce a property of the  local log-likelihood function that is crucial for our analysis. 
\begin{lemma}\label{lemma5}
    For any user $i$, assume $i$ belongs to cluster $\mathcal{C}_a$ for some $a \in [K]$. If $\sum_{d=2}^W \frac{\binom{n-1}{d-1}}{K^{d-1}} I_d + \gamma m p I_{\theta} \geq (1+\epsilon) \log n$, then there exists a small constant $\tau > 0$ such that the following statement holds with probability $1-O(n^{-\epsilon / 2})$:
    \begin{equation}\label{T1S33}
            L(i; \mathcal{C}_a) > \max_{\Bar{a} \in [K]: \Bar{a} \neq a} L(i; \mathcal{C}_{\Bar{a}}) + \tau \log n, \quad  \text{for all the users \ } i \in [n]. 
    \end{equation}
\end{lemma}

\begin{proof}
    See Section~\ref{section5} for the detailed proof.
\end{proof}

We denote $L(i; \mathcal{C}_a, \mathcal{C}_{\Bar{a}}) \triangleq L(i; \mathcal{C}_a)-L(i; \mathcal{C}_{\Bar{a}})$, where $a$ is the ground truth cluster that user $i$ belongs to. Then, for any $\Bar{a} \neq a$,  Eqn.~\eqref{T1S33} is equivalent to 
\begin{equation}\label{T1S34}
    L(i; \mathcal{C}_a, \mathcal{C}_{\Bar{a}}) \geq \tau \log n.
\end{equation}

Let $\mathcal{Z}_\delta$ be the set of partitions $\{ \mathcal{C}_k^z \}_{k \in [K]}$ of the $n$ user nodes, which satisfy (i) $\mathcal{C}_{k_1}^z \cap \mathcal{C}_{k_2}^z = \emptyset$ for any $k_1, k_2 \in [K]$, (ii) $\cup_{k \in [K]}\mathcal{C}_k^z = [n]$, (iii) $\sum_{k \in [K]} \mathcal{C}_k^z \setminus \mathcal{C}_k = \delta n$, where $\delta \in [ 1/n, 1/2 )$. It suffices to prove that there exists a constant $\delta'>0$ such that if $\delta < \delta'$, the following event happens with high probability: 
\begin{itemize}
    \item \emph{For any partition $\{\mathcal{C}_k^z\}_{k \in [K]} \in \mathcal{Z}_{\delta}$, the output after a single iteration belongs to $\mathcal{Z}_{\delta / 2}$.}
\end{itemize}
Then, running $T=\frac{\log (\delta' n)}{\log 2}=O(\log n)$ iterations ensures exact recovery of clusters. The formal statement  is given in the following lemma.

\begin{lemma}\label{lemma6}
    For any constant $\tau > 0$, there exists $\delta' < 1/2$ such that if $\delta < \delta'$, the following statement holds with probability $1-O(n^{-1})$: 
    for any partition $\{ \mathcal{C}_k^z \}_{k \in [K]} \in \mathcal{Z}_\delta$ and for any $\Bar{a} \neq a$,
    \begin{equation}\label{T1S35}
        |L(i; \mathcal{C}_a^z, \mathcal{C}_{\Bar{a}}^z)-L(i; \mathcal{C}_a, \mathcal{C}_{\Bar{a}})| \leq \frac{\tau}{2} \log n,
    \end{equation}
    for all except $\delta n/2$ nodes.
\end{lemma}
\begin{proof}
    The detailed proof is delivered in Section~\ref{section5}.
\end{proof}

Note that if node $i$ satisfies Eqn.~\eqref{T1S35}, by Eqn.~\eqref{T1S34} and the triangle inequality, we have $L(i; \mathcal{C}_a^z, \mathcal{C}_{\Bar{a}}^z) \ge \frac{\tau n}{2}$, meaning that node  $i$ will be classified into the true cluster based on the refinement rule. According to Lemma~\ref{lemma6}, the number of nodes that do not satisfy Eqn.~\eqref{T1S35} (which are likely to be misclassified) will be reduced by half after one iteration. Thus, running $T=O(\log n)$ iterations yields exact recovery of clusters. 

The aforementioned proof assumes the parameters $\{\af\}_{d=2}^W, \{\bt\}_{d=2}^W$, and $\theta$ are known \emph{a priori}. When these parameters are not known, one can estimate them using the following rule:
$$
\alpha'_d \triangleq \frac{ \sum_{k\in [K]} h_d(\mathcal{C}_k^{(0)}, \mathcal{C}_k^{(0)})}{ K\binom{n/K}{d}}, \quad  \beta'_d \triangleq \frac{ |\mathcal{H}_d| - \sum_{k\in [K]} h_d(\mathcal{C}_k^{(0)}, \mathcal{C}_k^{(0)})}{ \binom{n}{d} - K \binom{n/K}{d}} , \quad  \theta' \triangleq 1 -\frac{|\Lambda_{R^{(0)}}|}{|\mathcal{U}|}.
$$
For simplicity, we define $a'_d \triangleq \log \left( \frac{\alpha'_d (1-\beta'_d)}{\beta'_d (1-\alpha'_d)} \right)$ and $b' \triangleq \log \left( \frac{1-\theta'}{\theta'} \right)$.
Let 
$$L'(i; \mathcal{C}_k) \triangleq \sum_{d=2}^{W} a'_d \cdot h_d(\{i\}, \mathcal{C}_k) + b' |\Lambda_i(v_k)|,$$ and we further define $L'(i; \mathcal{C}_k, \mathcal{C}_{\bar{k}}) \triangleq L'(i;\mathcal{C}_k) - L'(i;\mathcal{C}_{\bar{k}})$. The following lemma controls the error due to the parameter estimation.

\begin{lemma}\label{lemma7}
    Suppose $p=\Theta \left( \frac{\log m}{n} + \frac{\log n}{m} \right)$ and $m=O(n)$, then for any constant $\tau > 0$, the following statement holds with probability approaching $1$: for any $i \in [n]$, $t>1$ and $\Bar{a} \neq a$,
    \begin{equation}\label{T1S36}
        |L'(i; \mathcal{C}^{(t)}_a, \mathcal{C}^{(t)}_{\Bar{a}}) - L(i; \mathcal{C}^{(t)}_a, \mathcal{C}^{(t)}_{\Bar{a}})| \leq \frac{\tau}{2} \log n.
    \end{equation}
\end{lemma}
\begin{proof}
    See Section~\ref{section5} for the detailed proof.
\end{proof}
By Eqns.~\eqref{T1S34}~\eqref{T1S35}~\eqref{T1S36} and the triangle inequality, we can show that, for any $\{ \mathcal{C}^z_k \}_{k \in [K]} \in \mathcal{Z}_{\delta}$, the output after a single step of iteration belongs to $\mathcal{Z}_{\delta / 2}$. It also means that Stage 3 achieves exact recovery of clusters within $T=O(\log n)$ iterations.

\section{Proof of Theorem~2}\label{section4}
First, note that Theorem~2 can be restated as follows:
\begin{itemize}
    \item \emph{For any $\epsilon > 0$, if $ \sum_{d=2}^{W} \frac{\binom{n-1}{d-1}}{K^{d-1}} I_d + \gamma m p I_\theta \leq (1-\epsilon) \log n$ or $\frac{n}{K} p I_\theta \leq (1-\epsilon) \log m$, then exact matrix completion is impossible.}
\end{itemize}

We first show that the ML estimator is the optimal estimator. Let $\psi_{\text{ML}}|_{\mathcal{R}^{(\gamma)}}$ denote the ML estimator whose output is constrained in $\mathcal{R}^{(\gamma)}$, and let $\widehat{R}$ be the matrix that is chosen uniformly at random from $\mathcal{R}^{(\gamma)}$. We show $ \inf \limits_{\psi} P_{\text{err}}^{(\gamma)}(\psi) \geq \mathbb{P} ( \psi_{\text{ML}}|_{\mathcal{R}^{(\gamma)}} (U, \{ HG_d \}_{d=2}^W) \neq R \ | \ R = \widehat{R} ) $ as follows:
\begin{equation}\label{26}
	\begin{aligned}
		\inf \limits_{\psi} P_{\text{err}}^{(\gamma)}(\psi) &= \inf \limits_{\psi} \max \limits_{X \in \mathcal{R}^{(\gamma)}} \mathbb{P} ( \psi (U, \{ HG_d \}_{d=2}^W) \neq R \ | \ R=X) \\
		& \geq \inf \limits_{\psi} \mathbb{P} ( \psi (U, \{ HG_d \}_{d=2}^W) \neq R \ | \ R=\widehat{R}) \\
		& \overset{\text{(i)}}{=}  \inf \limits_{ \psi: \psi (U, \{ HG_d \}_{d=2}^W) \in \mathcal{R}^{(\gamma)} } \mathbb{P} ( \psi  (U, \{ HG_d \}_{d=2}^W) \neq R \ | \ R=\widehat{R}  )  \\
		& \overset{\text{(ii)}}{=} \mathbb{P} ( \psi_\text{ML}|_{\mathcal{R}^{(\gamma)}} (U, \{ HG_d \}_{d=2}^W) \neq R \ | \ R =\widehat{R}),
	\end{aligned}
\end{equation}
where (i) holds since $\psi (U, \{ HG_d \}_{d=2}^W) \in \mathcal{R}^{(\gamma)} $ should be true for an optimal estimator; (ii) follows from the fact that the ML estimator is the optimal under a uniform prior distribution. Moreover, note that by symmetry, $ \mathbb{P} ( \psi_\text{ML}|_{\mathcal{R}^{(\gamma)}} (U, \{ HG_d \}_{d=2}^W) \neq R \ | \ R =R')$ is identical for any $R' \in \mathcal{R}^{(\gamma)}$, thus one can fix the ground truth matrix to be $R'$ in the following analysis.

Because of the optimality of the ML estimator $\psi_{\text{ML}}|_{\mathcal{R}^{(\gamma)}}$, in order to prove Theorem~2, it suffices to prove $\mathbb{P} ( \psi_\text{ML}|_{\mathcal{R}^{(\gamma)}} (U, \HG) \neq R \ | \ R = R' ) \nrightarrow 0 $ when $ \sum_{d=2}^{W} \frac{\binom{n-1}{d-1}}{K^{d-1}} I_d + \gamma m p I_\theta \leq (1-\epsilon) \log n$ or $\frac{n}{K} p I_\theta \leq (1-\epsilon) \log m$.

Using the log-likelihood function $L(X)$ in Lemma~1, we obtain that
\begin{equation}\label{27}
	\begin{aligned}
		& \mathbb{P} ( \psi_\text{ML}|_{\mathcal{R}^{(\gamma)}} (U, \{HG_d\}_{d=2}^W ) \neq R \ | \ R = R' )\\
		&= \mathbb{P} (\exists X \in \mathcal{R}^{(\gamma)} \setminus \{R'\} \operatorname{ \ s.t. \ } L(X)\geq L(R') ) \\
		& = 1- \mathbb{P} (\forall  X \in \mathcal{R}^{(\gamma)} \setminus \{R'\}: L(X)<L(R')).
	\end{aligned}
\end{equation}
Now we introduce two subsets $\mathcal{R}_1, \mathcal{R}_2 \subseteq \mathcal{R}^{(\gamma)} \setminus \{R'\}$:
\begin{enumerate}[label=(\roman*)]
    \item $\mathcal{R}_1$ is the set of matrices such that the nominal rating vector corresponding to  cluster $\mathcal{C}_1$ is different from the true nominal rating vector $v_1$ in \emph{only one} location. This means that every element $X\in\mathcal{R}_1$ is identical to $R'$ except that the row vectors of $X$ and the row vectors of $R'$ corresponding to cluster $\mathcal{C}_1$ differ in one location.   
    \item $\mathcal{R}_2$ is the set of matrices such that the nominal rating vectors are identical to those of $R'$ but there exists  one user belonging to cluster $\mathcal{C}_1$ and is misclassified to $\mathcal{C}_2$, and there exists another user belonging to cluster $\mathcal{C}_2$ and is misclassified to $\mathcal{C}_1$. Specifically, for every element $X\in\mathcal{R}_2$, the corresponding user clusters, denoted by $\{\mathcal{C}_k^X\}_{k \in [K]}$, satisfy $|\mathcal{C}_1^X \setminus \mathcal{C}_1| = |\mathcal{C}_2^X \setminus \mathcal{C}_2| = 1$ and  $\mathcal{C}_k^X = \mathcal{C}_k$ for $k \in [K]\setminus \{1,2\}$. 
    
\end{enumerate}
Thus, Eqn.~\eqref{27} is lower-bounded by
$$
1- \mathbb{P} (\forall X \in \mathcal{R}_1: L(X)<L(R')) \mathrm{ \ \ \ and \ \ \ } 1-  \mathbb{P} (\forall X \in \mathcal{R}_2: L(X)<L(R')).
$$
Therefore, it suffices to prove that 
\begin{itemize}[wide, labelwidth=!, labelindent=10pt]
    \item $\mathbb{P} (\forall X \in \mathcal{R}_1: L(X)<L(R')) \rightarrow 0$ if  $\frac{n}{K} p I_\theta \leq (1-\epsilon) \log m$ (see Subsection~\ref{part1});
    \item $\mathbb{P} (\forall X \in \mathcal{R}_2: L(X)<L(R')) \rightarrow 0$ if $ \sum_{d=2}^{W} \frac{\binom{n-1}{d-1}}{K^{d-1}} I_d + \gamma m p I_\theta \leq (1-\epsilon) \log n$ (see Subsection~\ref{part2})
\end{itemize}

\subsection{Part 1}\label{part1}
For any $X' \in \mathcal{R}_1$, by using Lemma~\ref{lemma1}, we obtain
\begin{equation}\label{S41_1}
    \mathbb{P} ( L(X') < L(R') ) = 1 - \mathbb{P} ( L(X') \geq L(R') ) = 1- \mathbb{P} \Big( \sum_{i=1}^{n/K} P_i (2\Theta_i -1) \geq 0 \Big),
\end{equation}
where $\{P_i\} \overset{\text{i.i.d}}{\sim} \text{Bern}(p)$ and $\{\Theta_i\} \overset{\text{i.i.d}}{\sim} \text{Bern}(\theta)$.
By applying Lemma~\ref{lemma3} with $L=n/K$, we have 
\begin{equation}
    \mathbb{P} \Big( \sum_{i=1}^{n/K} P_i (2\Theta_i -1) \geq 0 \Big) \geq \frac{1}{4} \exp \Big\{  -(1+o(1))  \frac{n}{K} p I_{\theta} \Big\}.
\end{equation}
Thus, Eqn.~\eqref{S41_1} is upper-bounded by
\begin{equation}
\begin{aligned}
        1 - \frac{1}{4} \exp \Big\{  -(1+o(1))  \frac{n}{K} p I_{\theta} \Big\} &\overset{\text{(i)}}{\leq} \exp \Bigg\{  - \frac{1}{4} \exp \Big\{  -(1+o(1))  \frac{n}{K} p I_{\theta} \Big\} \Bigg\}  \\
        & \overset{\text{(ii)}}{\leq} \exp \Big\{  -\frac{1}{4} m^{\epsilon-1} \Big\},
\end{aligned}
\end{equation}
where (i) follows from $1-x \leq e^{-x}$ and (ii) is due to the condition $\frac{n}{K} p I_\theta \leq (1-\epsilon) \log m$.
Note that for all $X \in \mathcal{R}_1$, the events $\{L(X) < L(R')\}_{X \in \mathcal{R}_1}$ are independent, thus we have
$$
\mathbb{P} (\forall X \in \mathcal{R}_1: L(X)<L(R')) = \mathbb{P} (L(X')<L(R')) ^{|\mathcal{R}_1|} \leq \exp \Big\{  -\frac{1}{4} m^{(\epsilon-1)} \cdot m \Big\} = o(1).
$$

\subsection{Part 2}\label{part2}
First, we use a combinatorial property to split the graphs.

\begin{lemma}\label{lemma8}
	For all $\HG$, we consider the following steps:
	
	(i) Let $r=\frac{n}{\log^3 n}$ and $\mathcal{T} \triangleq \{ 1, 2, ..., 2r \} \cup \{ \frac{n}{K}+1, \frac{n}{K}+2,..., \frac{n}{K}+2r \}. $ 
	
	(ii) For each hyperedge in $\HG$, if it contains two or more user nodes in set $\mathcal{T}$, then we delete these nodes  from $\mathcal{T}$.    
	
	(iii) We define the set of the remaining user nodes as $\mathcal{T}'$.
	
	Let $\Delta$ denote the event $|\mathcal{T}'| \geq 3r$. Then we have $\mathbb{P} (\Delta) = 1-o(1)$.
\end{lemma}

\begin{proof}
    See section~\ref{section5} for the proof.
\end{proof}
By Lemma~\ref{lemma8}, we can find two subsets $\mathcal{C}_1^s \in \mathcal{C}_1$ and $\mathcal{C}_2^s \in \mathcal{C}_2$ that satisfy (i) $|\mathcal{C}_1^s|=|\mathcal{C}_2^s|=\frac{n}{\log^3 n}$ and (ii) there is no hyperedge that contains users in $\mathcal{C}_1^s \cup \mathcal{C}_2^s$. Without loss of generality, we assume $1 \in \mathcal{C}_1^s$ (i.e., the first user belongs to cluster $\mathcal{C}_1^s$).

For the matrix $R'$ and for users $i_1 \in \mathcal{C}^s_1$ and $i_2 \in \mathcal{C}^s_2$, we define $R'^{(i_1)}$ as the matrix that misclassifies user $i_1 \in \mathcal{C}_1$ to $\mathcal{C}_2$, and $R'^{(i_2)}$ as the matrix that misclassifies user $i_2 \in \mathcal{C}_2$ to $\mathcal{C}_1$. Hence, $(R'^{(i_1)})^{(i_2)} \in \mathcal{R}_2$. Since $\mathbb{P}\left( \Delta \right)=1-o(1)$, $\mathbb{P} (\forall X \in \mathcal{R}_2: L(X)<L(R'))$ is upper-bounded by
\begin{equation}\label{S42_1}
        \mathbb{P} \left( \forall \  i_1 \in \mathcal{C}_1^s \mathrm{ \ and \ } i_2 \in \mathcal{C}_2^s:  L\left( \left( R'^{(i_1)} \right)^{(i_2)} \right) < L(R') \right) \cdot (1-o(1)).
\end{equation}

\begin{lemma}\label{lemma9}
	For $i_1 \in \mathcal{C}_1^s$ and $i_2 \in \mathcal{C}_2^s$, if both $L\left( R'^{(i_1)}\right) \geq L(R') $ and $L\left( R'^{(i_2)} \right) \geq L(R')$, then $L\Big( \left( R'^{(i_1)} \right)^{(i_2)} \Big) \geq L(R)$.
\end{lemma}
\begin{proof}
    See Section~\ref{section5} for the detailed proof.
\end{proof}

By Lemma~\ref{lemma9} and the union bound, Eqn.~\eqref{S42_1} is upper-bounded by
\begin{equation}\label{38}
	\mathbb{P} \left( \forall  i_1 \in \mathcal{C}_1^s: L ( R'^{(i_1)} ) < L(R')  \right) \cdot (1-o(1)) + \mathbb{P} \left( \forall  i_2 \in \mathcal{C}_2^s: L ( R'^{(i_2)} ) < L(R')  \right) \cdot (1-o(1)).
\end{equation}

By symmetry, Eqn.~\eqref{38} equals $ 2 \mathbb{P} \left( \forall  i \in \mathcal{C}_1^s: L ( R'^{(i_1)} ) < L(R')  \right) \cdot (1-o(1)) $. Since no pair of users in $\mathcal{C}_1^s$ is connected by any hyperedges, the events $\left\{  L ( R'^{(i_1)} ) < L(R') \right\}_{i_1 \in \mathcal{C}_1^s}$ are mutually independent. Hence,
\begin{equation}\label{39}
	\begin{aligned}
		2 \mathbb{P} \left( \forall  i_1 \in \mathcal{C}_1^s: L ( R'^{(i_1)} ) < L(R')  \right) \cdot (1-o(1)) = 2 \mathbb{P} \left( L(R'^{(1)}) < L(R') \right)^{|\mathcal{C}_1^s|} \cdot (1-o(1)),
	\end{aligned}
\end{equation}
since it is assumed that $1 \in \mathcal{C}_1^s$. According to Lemma~\ref{lemma1}, 
\begin{equation}\label{41}
\mathbb{P} \left( L(R'^{(1)}) < L(R') \right) = 1 -	\mathbb{P} \left( \sum_{d=2}^{W} a_d  \sum_{j=1}^{K_d} (B_{dj} - A_{dj}) + b \sum_{i=1}^{L}P_i(2\Theta_i - 1) \geq 0  \right), 
\end{equation}
where $\{A_{dj}\}_j \overset{\text{i.i.d}}{\sim} \text{Bern}(\alpha_d)$, $\{B_{dj}\}_j \overset{\text{i.i.d}}{\sim} \text{Bern}(\beta_d)$, $\{P_{i}\}_i \overset{\text{i.i.d}}{\sim} \text{Bern}(p)$, and $\{\Theta_{i}\}_i \overset{\text{i.i.d}}{\sim} \text{Bern}(\theta)$, and one can show that $K_d =   \binom{n/K - 1}{d - 1} = (1+ o(1)) \frac{\binom{n-1}{d -1 }}{K^{d-1}}  $ and $L \leq \gamma m$. Then, applying Lemma~\ref{lemma3}, Eqn.~\eqref{41} is upper-bounded by
\begin{equation}\label{42}
\begin{aligned}
    & 1 - \frac{1}{4} \exp \left\{ -(1+o(1)) \sum_{d=2}^{W} \frac{\binom{n-1}{d-1}}{K^{d-1}} I_d - (1+o(1))\gamma m p I_\theta \right\} \\
    & \overset{\text{(i)}}{\leq} \exp \left\{ - \frac{1}{4} \exp \left\{ -(1+o(1)) \sum_{d=2}^{W} \frac{\binom{n-1}{d-1}}{K^{d-1}} I_d - (1+o(1))\gamma m p I_\theta \right\} \right\} \\
    &\overset{\text{(ii)}}{\leq} \exp \Big\{ - \frac{1}{4} n^{\epsilon-1}  \Big\},
\end{aligned}
\end{equation}
where (i) holds since $1-x \leq e^{-x}$, (ii) follows from $ \sum_{d=2}^{W} \frac{\binom{n-1}{d-1}}{K^{d-1}} I_d + \gamma m p I_\theta \leq (1-\epsilon) \log n$.
Hence, Eqn.~\eqref{39} is upper-bounded by
\begin{equation}
    \begin{aligned}
        2 \cdot \exp \Big\{ - \frac{1}{4} n^{\epsilon-1} |\mathcal{C}_1^s| \Big\} \cdot (1-o(1)) = 2 \cdot \exp \Big\{ - \frac{1}{4}  \frac{n^{\epsilon}}{\log^3 n} \Big\} \cdot (1-o(1)) = o(1),
    \end{aligned}
\end{equation}
which means $\mathbb{P} (\forall X \in \mathcal{R}_2: L(X)<L(R')) \rightarrow 0$.

\section{Proof of Lemmas}\label{section5}

 \paragraph{Proof of Lemma 1:} 
Since the observations of $U$ and $\HG$ are mutually independent. The likelihood of $X$ can be decomposed as
\begin{equation}\label{12}
\mathbb{P}\left(\{U, \HG\}|R=X \right)=\mathbb{P}\left(U | R=X \right) \prod_{d=2}^{W}\mathbb{P}\left(HG_d | R=X \right).
\end{equation}
Here
\begin{equation}
\mathbb{P}\left(U | R=X \right) = p^{|\Omega|}(1-p)^{nm - |\Omega|} \theta^{|\Omega|-|\Lambda_X|}(1-\theta)^{|\Lambda_X|}, \quad  \text{and} \label{13}
\end{equation}
\begin{equation}\label{14}
\begin{aligned}
    & \mathbb{P}(HG_d | R=X) = \af^{\sum_{k=1}^K h_d( \mathcal{C}_k^X, \mathcal{C}_k^X)} (1-\af)^{{k\binom{n/k}{d}}-\sum_{k=1}^K h_d( \mathcal{C}_k^X, \mathcal{C}_k^X)} \\
    &\qquad\qquad\qquad\qquad \cdot \bt ^{|\mathcal{H}_d| - \sum_{k=1}^K h_d( \mathcal{C}_k^X, \mathcal{C}_k^X)} (1-\bt)^{\binom{n}{d}-k\binom{n/k}{d}-\sum_{k=1}^K h_d( \mathcal{C}_k^X, \mathcal{C}_k^X)}.
\end{aligned}
\end{equation}
By simple calculations we get
\begin{equation}\label{15}
		\begin{aligned}
			L(X) & = \log \left( \mathbb{P}\left(\{U, \HG\}|R=X \right) \right) \\
			& = \log \left( \mathbb{P}\left(U | R=X \right) \prod_{d=2}^{W}\mathbb{P}\left(HG_d | R=X \right) \right) \\
			& = \log \left( \mathbb{P}\left(U | R=X \right) \right) +\sum_{d=2}^{W} \log \left( \mathbb{P}\left(HG_d | R=X \right) \right) \\
			& = \sum_{d=2}^{W} a_d \cdot \sum_{k=1}^K h_d( \mathcal{C}_k^X, \mathcal{C}_k^X) +  b \cdot |\Lambda_X| + C,
		\end{aligned}
	\end{equation}
	where $C$ is independent of the choice of $X$. 
 \hfill $\square$\par

 \paragraph{Proof of Lemma 2:}  
    Using the Chernoff bound, we have
        \begin{align}
        & \mathbb{P} \left( \sum_{d=2}^{W} a_d  \sum_{j=1}^{K_d} (B_{dj} - A_{dj}) + b \sum_{i=1}^{L}P_i(2\Theta_i - 1) \geq -y  \right) \\
        & \leq\inf \limits_{t>0} e^{ty} \cdot \mathbb{E}\left[ \exp \left\{ \sum_{d=2}^{W} t   \sum_{j=1}^{K_d} a_d (B_{dj} - A_{dj}) + t  \sum_{i=1}^{L}b P_i(2\Theta_i - 1) \right\} \right] \\
        & = \inf \limits_{t>0} e^{ty} \ \prod_{d=2}^{W} \mathbb{M}_{1}(t)^{K_d} \cdot \mathbb{M}_{2} (t)^{L}, \label{eq:e17}
        \end{align}
	where $\mathbb{M}_1(t) \triangleq \mathbb{M}_{a_d(B_{d1}-A_{d1})}(t)$ and  $\mathbb{M}_2(t) \triangleq \mathbb{M}_{bP_1(2\Theta_1 - 1)}(t)$. Using the definitions of $a_d$ and $b$, we obtain 
 \begin{align}
     &\mathbb{M}_1(t) = \af \bt + (1-\af)(1-\bt) + (1-\af)\bt \left( \frac{(1-\bt)\af  }{(1-\af )\bt  }  \right)^{t} + (1-\bt )\af \left( \frac{(1-\bt )\af  }{(1-\af )\bt  }  \right)^{-t}, \notag \\
     &\mathbb{M}_2(t)  = 1-p + p\theta \left( \frac{1-\theta}{\theta} \right)^t + p(1-\theta)\left( \frac{1-\theta}{\theta} \right)^{-t}. \notag
 \end{align}
	Through simple calculations, it can be demonstrated that
	$\frac{1}{2} = \argmin \limits_{t>0} \mathbb{M}_{1}(t) $ and $\frac{1}{2} = \argmin \limits_{t>0} \mathbb{M}_{2}(t)$. Thus, Eqn.~\eqref{eq:e17} is further upper-bounded by
        \begin{equation} \label{eqn11}
            \begin{aligned}
                &   e^{\frac{1}{2}y} \cdot \prod_{d=2}^{W} \mathbb{M}_{1} \left(1/2\right)^{K_d} \cdot \mathbb{M}_{2} \left(1/2 \right)^{L}  \\
			& = \exp \left\{\frac{1}{2}y+ \sum_{d=2}^{W} K_d \log  \mathbb{M}_{1}\left(\frac{1}{2} \right) + L \log \mathbb{M}_{2} \left(\frac{1}{2} \right)  \right\} \\
			& = \exp \left\{ \frac{1}{2}y + \sum_{d=2}^{W} K_d  \cdot 2\log \left(\sqrt{\af \bt } + \sqrt{(1-\af) (1-\bt)} \right) + L \log \left( 2p\sqrt{(1-\theta)\theta}+1-p \right)  \right\} \\
			& \overset{\text{(i)}}{=} \exp\left\{\frac{1}{2}y + \sum_{d=2}^{W} K_d \cdot 2\log \left(\sqrt{ \af \bt } + \left( 1- \frac{1}{2} \af +O(\af^2) \right)\left( 1- \frac{1}{2}\bt + O(\bt^2) \right)\right) \right\} \\
            & \qquad\qquad\qquad\qquad\qquad\qquad\qquad\qquad\qquad\qquad\qquad\qquad \cdot \exp \left\{ L \log \left( 2p\sqrt{(1-\theta)\theta}+1-p \right) \right\} \\
			& \overset{\text{(ii)}}{=} \exp \left\{\frac{1}{2}y + \sum_{d=2}^{W} K_d \cdot 2 \left( \sqrt{\af \bt}-\frac{1}{2}\af-\frac{1}{2}\bt +O(\af^2 + \bt^2) \right) +L \left( p(2\sqrt{(1-\theta)\theta}-1)+O(p^2)\right)  \right\} \\
			& =\exp \left\{ \frac{1}{2}y - \sum_{d=2}^{W} K_d \left((\sqrt{\af}-\sqrt{\bt})^2 + O(\af^2 + \bt^2)\right) - L \left(p(\sqrt{1-\theta}-\sqrt{\theta})^2+O(p^2)\right) \right\} \\
			& =\exp\left\{ \frac{1}{2}y - \sum_{d=2}^{W}(1+o(1))K_d I_d - (1+o(1)) L p I_\theta \right\},
            \end{aligned}
        \end{equation}
	where (i) and (ii) follow from the facts that $\sqrt{1-x}=1-\frac{1}{2}x+O(x^2)$ and $\log(1+x)=x+O(x^2)$ as $x\rightarrow 0$. 
    
 \hfill $\square$\par

\paragraph{Proof of Lemma 3:} 
     Let $Y_{dj} \triangleq a_d ( B_{dj}-A_{dj} )$ and $Z_{k} \triangleq b P_k(2\Theta_k -1)$. The distribution of $Y_{dj}$ is denoted by $p_{Y_{dj}}(\cdot)$, which is identical to $p_{Y_{d1}}(\cdot)$ since $\{Y_{dj}\}_{j=1}^{K_d}$ are independent and identically distributed random variables. Similarly, we denote the distribution of $Z_k$ by $p_{Z_k}(\cdot)$, which is identical to $p_{Z_1}(\cdot)$.  For any $\xi > 0$, we have
	\begin{align*}
		& \mathbb{P} \left( \sum_{d=2}^{W} a_d  \sum_{j=1}^{K_d} (B_{dj} - A_{dj}) + b \sum_{i=1}^{L}P_i(2\Theta_i - 1) \geq 0  \right) \\
		& = \sum_{\{y_{d j}\}, \{z_{k}\}: \sum_{d,j} y_{dj} + \sum_{k} z_k > 0 } \quad \prod_{d=2}^{W} \prod_{j=1}^{K_d} p_{Y_{d1}}(y_{dj}) \prod_{k=1}^{L} p_{Z_1}(z_k) \\
		& \geq \sum_{\{y_{d j}\}, \{z_{k}\}: \sum_{d,j} y_{dj} + \sum_{k} z_k < \xi } \quad \prod_{d=2}^{W} \prod_{j=1}^{K_d} p_{Y_{d1}}(y_{dj}) \prod_{k=1}^{L} p_{Z_1}(z_k) \\
		& \overset{\text{(i)}}{\geq}  \frac{\left(\prod_{d=2}^{W} \mathbb{M}_{Y_{d1}}(1/2)^{K_d}\right) \mathbb{M}_{Z_1}(1/2)^{L} }{e^{\frac{1}{2}\xi}} \\
  & \qquad \qquad \qquad \qquad \qquad \cdot \sum_{\{y_{d j}\}, \{z_{k}\}: \sum_{d,j} y_{dj} + \sum_{k} z_k < \xi } \quad \prod_{d=2}^{W} \prod_{j=1}^{K_d} \frac{e^{\frac{1}{2} y_{dj}} p_{Y_{d1}}(y_{dj})}{\mathbb{M}_{Y_{d1}}(1/2)}  \prod_{k=1}^{L}  \frac{e^{\frac{1}{2} z_k} p_{Z_1}(z_k)}{\mathbb{M}_{Z_1}(1/2)} 
        \end{align*}
        \begin{equation}\label{pl3e1}
            \begin{aligned}
                	& = \exp \left\{ \sum_{d=2}^{W} K_d \log \mathbb{M}_{Y_{d1}}(1/2) + L \log \mathbb{M}_{Z_1}(1/2) - \frac{1}{2} \xi \right\} \cdot \\
		& \quad \quad \sum_{\{y_{d j}\}, \{z_{k}\}: \sum_{d,j} y_{dj} + \sum_{k} z_k < \xi } \quad \prod_{d=2}^{W} \prod_{j=1}^{K_d} \frac{e^{\frac{1}{2} y_{dj}} p_{Y_{d1}}(y_{dj})}{\mathbb{M}_{Y_{d1}}(1/2)}  \prod_{k=1}^{L}  \frac{e^{\frac{1}{2} z_k} p_{Z_1}(z_k)}{\mathbb{M}_{Z_1}(1/2)} \\
		&   \overset{\text{(ii)}}{=} \exp \left\{ \sum_{d=2}^{W} K_d \log \mathbb{M}_{Y_{d1}}(1/2) + L \log \mathbb{M}_{Z_1}(1/2) - \frac{1}{2} \xi \right\} 
		\mathbb{P}\left( 0< \sum_{d=2}^{W} \sum_{j=1}^{K_d} V_{dj} + \sum_{k=1}^{L} W_k < \xi \right), 
            \end{aligned}
        \end{equation}
where (i) holds since $e^{\frac{1}{2} \left( \sum_{d,j} y_{dj} + \sum_{k} z_k \right)} \leq e^{\frac{1}{2} \xi}$ when $\sum_{d,j} y_{dj} + \sum_{k} z_k < \xi$; at (ii), we define new i.i.d. random variables  $\{V_{dj}\}_{j=1}^{K_d}$ with distribution $p_{V_{d1}}(x) = \frac{e^{\frac{1}{2} x} p_{Y_{d1}}(x)}{\mathbb{M}_{Y_{d1}}(1/2)}$, and   $\{W_k\}_{k=1}^L$ with distribution $p_{W_1}(x) = \frac{e^{\frac{1}{2} x} p_{Z_1}(x)}{\mathbb{M}_{Z_1}(1/2)}$.
By Eqn.~\eqref{eqn11}, we get 
\begin{align}\label{pl3e2}
	\exp \left\{ \sum_{d=2}^{W} K_d \log \mathbb{M}_{Y_{d1}}(1/2) + L \log \mathbb{M}_{Z_1}(1/2) - \frac{1}{2} \xi \right\}  \\
 = \exp \left\{ -\frac{1}{2}\xi - \sum_{d=2}^{W}(1+o(1)) K_d I_d - (1+o(1)) L p I_\theta \right\} 
\end{align}

Next, we prove that for a suitable value of $\xi$, 
$$
\mathbb{P} \left( 0< \sum_{d=2}^{W} \sum_{j=1}^{K_d} V_{dj} + \sum_{k=1}^{L} W_k < \xi \right)< \frac{1}{4}.
$$

By Eqn.~\eqref{eqn11}, we have $\mathbb{M}_{Y_{d1}} = \left(\sqrt{\af \bt} + \sqrt{(1-\af)(1-\bt)} \right)^2$ and $ \mathbb{M}_{Z_1}(1/2)=2p\sqrt{(1-\theta)\theta} + 1 - p $. Then, the distribution of ${V_{d1}}$ and $W_1$ equals: 
$$
\begin{aligned}
	&\mathbb{P}\left(V_{d1}=\log \left( \frac{(1-\bt)\af}{(1-\af)\bt} \right)\right) = \mathbb{P}\left(V_{d1}= -\log \left( \frac{(1-\bt)\af}{(1-\af)\bt} \right)\right) = \frac{\sqrt{(1-\af)(1-\bt)\af \bt }}{\left(\sqrt{\af \bt} + \sqrt{(1-\af)(1-\bt)} \right)^2}, \\
	& \mathbb{P} \left( V_{d1} = 0 \right) = \frac{\af \bt + (1-\af)(1-\bt)}{\left(\sqrt{\af \bt} + \sqrt{(1-\af)(1-\bt)} \right)^2} 
\end{aligned} 
$$ and 
$$
\begin{aligned}
	&\mathbb{P}\left(W_1 = \log \left(\frac{1-\theta}{\theta}\right) \right) = \mathbb{P}\left(W_1 = - \log \left(\frac{1-\theta}{\theta}\right) \right) = \frac{p\sqrt{\theta(1-\theta)}}{2p\sqrt{\theta(1-\theta)+ 1+p}}, \\
	& \mathbb{P}(W_1=0) = \frac{1-p}{2p\sqrt{\theta(1-\theta)+ 1+p}}.
\end{aligned}
$$

Thus, simple calculations yield 
\begin{equation} \label{pl3e3}
	\begin{aligned}
		&\mathbb{E}[V_{d1}] = \mathbb{E}[W_1] = 0 \\
		&\mathbb{E}[V_{d1}^2] = \left(\log \left( \frac{(1-\bt)\af}{(1-\af)\bt} \right) \right)^2 \cdot \frac{\af \bt + (1-\af)(1-\bt)}{\left(\sqrt{\af \bt} + \sqrt{(1-\af)(1-\bt)} \right)^2} = O\left( \sqrt{\af \bt} \right) \\
		& \mathbb{E}[W_1^2] = \left( \log \left( \frac{1-\theta}{\theta} \right) \right)^2 \cdot \frac{p\sqrt{\theta(1-\theta)}}{2p\sqrt{\theta(1-\theta) + 1-p}} = O(p) \\
		& \mathbb{E}\left[ \left( \sum_{d=2}^{W} \sum_{j=1}^{K_d} V_{dj} + \sum_{k=1}^{L} W_k \right)^2 \right] = \sum_{d=2}^{W}\sum_{j=1}^{K_d} \mathbb{E}[V_{dj}^2] + \sum_{k=1}^{L}\mathbb{E}[W_k^2] =  \sum_{d=2}^{W} K_d \mathbb{E}[V_{d1}^2] + L\mathbb{E}[W_1^2] \\
            &= \sum_{d=2}^{W}  O\left( \sqrt{\af \bt K_d} \right) + O(pL).
	\end{aligned}
\end{equation}

Let $\xi=  \max \left\{ pL, \sqrt{\af \bt}K_d \right\} ^{3/4}$. By Eqn.~\eqref{pl3e3} and the Chebyshev’s inequality, we have
\begin{equation}\label{pl3e4}
	\begin{aligned}
		&\mathbb{P} \left( 0 < \sum_{d=2}^{W} \sum_{j=1}^{K_d} V_{dj} + \sum_{k=1}^{L} W_k <  \max \left\{ pL, \sqrt{\af \bt}K_d \right\} ^{3/4} \right) \\
		& = \frac{1}{2} - \mathbb{P} \left( \left(\sum_{d=2}^{W} \sum_{j=1}^{K_d} V_{dj} + \sum_{k=1}^{L} W_k\right)^2 \geq \max \left\{ pL, \sqrt{\af \bt}K_d \right\} ^{3/2} \right) \\
		& \geq \frac{1}{2} - \frac{\sum_{d=2}^{W}\sum_{j=1}^{K_d} \mathbb{E}[V_{dj}^2] + \sum_{k=1}^{L}\mathbb{E}[W_k^2]}{ \max \left\{ pL, \sqrt{\af \bt}K_d \right\} ^{3/2}} \\
		& = \frac{1}{2} - \frac{\sum_{d=2}^{W}  O\left( \sqrt{\af \bt K_d} \right) + O(pL)}{ \max \left\{ pL, \sqrt{\af \bt}K_d \right\} ^{3/2}} = \frac{1}{2} - \frac{O( \max \left\{ pL, \sqrt{\af \bt}K_d \right\})}{ \max \left\{ pL, \sqrt{\af \bt}K_d \right\} ^{3/2}} \overset{\text{(i)}}{\rightarrow} \frac{1}{2}> \frac{1}{4},
	\end{aligned}
\end{equation}
where (i) follows from the fact that $\max \left\{ \sqrt{\af \bt}K_d, pL \right\}=\omega(1)$. Substituting Eqn.~\eqref{pl3e2} and Eqn.~\eqref{pl3e4} into Eqn.~\eqref{pl3e1}, we obtain the lower bound in Eqn.~\eqref{eqn4}.

\hfill $\square$\par

 \paragraph{Proof of Lemma 4:} 
    According to the definition of $Y \sim \text{Binom}(\tau n, p)$, we have
	\begin{align}
	\mathbb{P}\left( Y \geq \frac{cnp}{\log \frac{1}{\tau}} \right) = \sum_{i \geq \frac{cnp}{\log \frac{1}{\tau}}} \mathbb{P}(Y=i) = \sum_{i \geq \frac{cnp}{\log \frac{1}{\tau}}} \binom{\tau n}{i} p^i (1-p)^{\tau n - i}. \label{eq:tag}
	\end{align}
	Due to the inequalities $\binom{a}{b}\leq \left( \frac{ea}{b} \right)^b$ and $1-a \leq e^{-a}$, the right-hand side of Eqn.~\eqref{eq:tag} is further upper-bounded by
	$$
	\begin{aligned}
		& e^{-\tau n p}\sum_{i \geq \frac{cnp}{\log \frac{1}{\tau}}} \left( \frac{e \tau n}{i} \right)^i p^i (1-p)^{-i}  \overset{\text{(i)}}{\leq} \sum_{i \geq \frac{cnp}{\log \frac{1}{\tau}}} \left( \frac{2e \tau np}{i} \right)^i \leq \sum_{i \geq \frac{cnp}{\log \frac{1}{\tau}}} \left( \frac{2e \tau np}{\frac{cnp}{\log \frac{1}{\tau}}} \right)^i = \sum_{i \geq \frac{cnp}{\log \frac{1}{\tau}}} \left( \frac{2e \tau \log \frac{1}{\tau}}{c} \right)^i \\
		 & \overset{\text{(ii)}}{\leq} \sum_{i \geq \frac{cnp}{\log \frac{1}{\tau}}} \left( \frac{2e \sqrt{\tau}}{c} \right)^i \overset{\text{(iii)}}{\leq} 2\left( \frac{2e \sqrt{\tau}}{c} \right)^{\frac{cnp}{\log \frac{1}{\tau}}} = 2\exp \left( -\log \left( \frac{c}{2e \sqrt{\tau}} \right) \frac{cnp}{\log \frac{1}{\tau}} \right) \overset{\text{(iv)}}{\leq} 2\exp \left( - \frac{cnp}{2} \right),
	\end{aligned}
	$$
	where (i) follows from the fact $1-p \geq 1/2$; (ii) holds since $\tau \log \frac{1}{\tau} \leq \sqrt{\tau}$ when $0 < \tau \leq 1$; (iii) follows due to the inequality $\sum_{i \geq b} a^i \leq \frac{a^b}{1-a} \leq 2a^b$ for $0<a<1/2$; (iv) holds since $c\geq 2e$.
 \hfill $\square$\par

\paragraph{Proof of Lemma 5:} 
We first prove that for a specific user $i \in [n]$ that belongs to cluster $a \in [K]$, the statement 
    $$
    \mathbb{P} \left( L(i; \mathcal{C}_a) - L(i; \mathcal{C}_{\Bar{a}}) \leq \tau \log n  \right) = O(n^{-1-\epsilon/2})
    $$
holds for any $\Bar{a} \in [K]$ such that $\bar{a}\neq a$. 
According to the expression of local log-likelihood function in Eqn.~\eqref{T1S31}, and recall that $a_d = \ad$ and $b = \bb$, we have
    \begin{equation}
        \begin{aligned}
            &\mathbb{P} \Big( L(i; \mathcal{C}_a) - L(i; \mathcal{C}_{\Bar{a}}) \leq \tau \log n  \Big) \\
            & = \mathbb{P} \Bigg( \sum_{d=2}^W a_d \cdot (h_d(\{i\}, \mathcal{C}_a) - h_d(\{i\}, \mathcal{C}_{\Bar{a}}))  + b ( |\Lambda(v_a)| - |\Lambda(v_{\Bar{a}})| )  \leq \tau \log n \Bigg)\\
            & \overset{\text{(i)}}{\leq} \mathbb{P} \Bigg( \sum_{d=2}^W  a_d \sum_{j=1}^{\binom{n/K-1}{d-1}} (B_{dj} - A_{dj}) + \sum_{j=1}^{\binom{n/K}{d-1}-\binom{n/K-1}{d-1}} B'_{dj}   +  b \sum_{j=1}^{\gamma m} P_i(2\Theta_i - 1) \geq -\tau \log n \Bigg)\\
            & \overset{\text{(ii)}}{\leq} \mathbb{P} \Bigg(  \sum_{d=2}^W a_d \sum_{j=1}^{\binom{n/K-1}{d-1}} (B_{dj} - A_{dj})  + b \sum_{j=1}^{\gamma m} P_i(2\Theta_i - 1) \geq -\tau \log n - o(\log n)  \Bigg),
        \end{aligned}
    \end{equation}
    where $\{A_{dj}\}_j \overset{\text{i.i.d}}{\sim} \text{Bern}(\af)$, $\{B_{dj}\}_j, \{B'_{dj}\}_j \overset{\text{i.i.d}}{\sim} \text{Bern}(\bt)$, $\{P_i\}_i \overset{\text{i.i.d}}{\sim} \text{Bern}(p)$ and $\{\Theta_i\}_i \overset{\text{i.i.d}}{\sim} \text{Bern}(\theta)$. Moreover, (i) follows from the fact $\gamma m = \min_{i,j\in [K]: i \ne j}\Vert v_{i} - v_{j} \Vert_0$; (ii) is true since $\sum_{d=2}^W \sum_{j=1}^{\binom{n/K}{d-1}-\binom{n/K-1}{d-1}} B'_{dj} = o(\log n)$.
        According to Lemma~\ref{lemma2}, the above expression is further upper-bounded by 
    $$
    \begin{aligned}
        & \exp \left( \frac{1}{2} \tau \log n + \frac{1}{2}o(\log n) - (1+o(1))\sum_{d=2}^{W}\frac{\binom{n-1}{d-1}}{K^{d-1}} I_d - (1+o(1)) \gamma m p I_\theta \right) \\
	& \overset{\text{(i)}}{\leq} \exp \left( \frac{1}{2} \tau \log n - (1+o(1)) (1+\epsilon) \log n \right) = n^{-1 - \epsilon + \frac{1}{2} \tau},
    \end{aligned}
    $$
    where (i) follows from the fact $\lim_{n\rightarrow \infty} \binom{n/k-1}{d-1} = \frac{\binom{n-1}{d-1}}{K^{d-1}}$ and the condition $\sum_{d=2}^W \frac{\binom{n-1}{d-1}}{K^{d-1}} I_d + \gamma m p I_{\theta} \geq (1+\epsilon) \log n$. By choosing $\tau$ to be sufficiently small (e.g. $\tau = \epsilon$) and then taking a union bound over all the $n$ users, we complete the proof of Lemma~\ref{lemma5}.
\hfill $\square$\par

\paragraph{Proof of Lemma 6:} 
    For a fixed $\{ \mathcal{C}_k^z \}_{k \in [K]} \in \mathcal{Z}_\delta$, we say user $i$ is \emph{bad} if there exist an $\Bar{a} \neq a$ satisfying $|L(i; \mathcal{C}_a^z, \mathcal{C}_{\Bar{a}}^z) - L(i; \mathcal{C}_a, \mathcal{C}_{\Bar{a}}) | > \frac{\tau}{2} \log n$. Since there are at most $\binom{n}{\delta n}\cdot K^{\delta n}$ many partitions in $\mathcal{Z}_\delta$, and $\binom{n}{\delta n}\cdot K^{\delta n} \leq n^{\delta n}\cdot K^{\delta n} \leq e^{K\delta n \log n}$, it suffices to prove
    $$
	\mathbb{P}\left( \sum_{i=1}^{n} \mathbbm{1} \{ i \text{ \ is bad} \} > \frac{\delta}{2} n \right) \leq O(e^{-(K+1))\delta n \log n}).
    $$
    As the events $\{ \mathbbm{1} \{\text{user} \  i \text{ \ is bad} \}\}_{i=1}^{n}$ are \emph{not} mutually independent, we adopt the technique of \emph{decoupling analysis}~\cite{chen2016community} to handle this issue. First, note that
    \begin{equation}\label{s64}
            \begin{aligned}
            &|L(i; \mathcal{C}_a^z, \mathcal{C}_{\Bar{a}}^z)-L(i; \mathcal{C}_a, \mathcal{C}_{\Bar{a}})| \\
            & = \sum_{d=2}^{W} a_d \cdot |h_d(\{i\}, \mathcal{C}_a^z) - h_d(\{i\}, \mathcal{C}_a) -h_d(\{i\}, \mathcal{C}_{\Bar{a}}^z) + h_d(\{i\}, \mathcal{C}_{\Bar{a}})| \\
            & = \sum_{d=2}^{W} a_d \cdot  |h_d(\{i\}, \mathcal{C}_a^z \setminus \mathcal{C}_a) - h_d(\{i\}, \mathcal{C}_a \setminus \mathcal{C}_a^z) - h_d(\{i\}, \mathcal{C}_{\Bar{a}}^z \setminus \mathcal{C}_{\Bar{a}}) + h_d(\{i\}, \mathcal{C}_{\Bar{a}} \setminus \mathcal{C}_{\Bar{a}}^z)| \\
            & \leq \sum_{d=2}^{W}  a_d \cdot  \big( h_d(\{i\}, \mathcal{C}_a^z \triangle \mathcal{C}_a) + h_d(\{i\}, \mathcal{C}_{\Bar{a}}^z \triangle \mathcal{C}_{\Bar{a}} ) \big) \\
            &\leq 2 \sum_{d=2}^{W}  a_d \cdot  \max \Big\{ h_d(\{i\}, \mathcal{C}_a^z \triangle \mathcal{C}_a), h_d(\{i\}, \mathcal{C}_{\Bar{a}}^z \triangle \mathcal{C}_{\Bar{a}})\Big\},
            \end{aligned}
	\end{equation}
where $\triangle$ represents the \emph{symmetric difference} of two sets. Without loss of generality, we assume $\max \{ h_d(\{i\}, \mathcal{C}_a^z \triangle \mathcal{C}_a), h_d(\{i\}, \mathcal{C}_{\Bar{a}}^z \triangle \mathcal{C}_{\Bar{a}})\} = h_d(\{i\}, \mathcal{C}_a^z \triangle \mathcal{C}_a)$, and the other case can be handled in a similar manner. Then we have the following upper bound
$$
|L(i; \mathcal{C}_a^z, \mathcal{C}_{\Bar{a}}^z)-L(i; \mathcal{C}_a, \mathcal{C}_{\Bar{a}})| \leq 2 \sum_{d=2}^{W}  a_d \cdot  h_d(\{i\}, \mathcal{C}_a^z \triangle \mathcal{C}_a),
$$
and the right-hand side  can further be split as
 $$
2 \sum_{d=2}^{W}  a_d \cdot  ((\Delta_i^1)_d + (\Delta_i^2)_d  ),
$$
where $(\Delta_i^1)_d \triangleq h_d(\{i\}, (\mathcal{C}_a^z \triangle \mathcal{C}_a) \cap \{ 1,2,..., i\} )$ and $(\Delta_i^2)_d \triangleq h_d(\{i\}, (\mathcal{C}_a^z \triangle \mathcal{C}_a) \setminus \{1,2,...,i\} )$. Thus, for each $x=1, 2$, the set of variables $\left\{ (\Delta_i^x)_d \right\}_{i=1}^{n}$ is mutually independent. Then, by defining $\mathbb{I}_i^{x} \triangleq \mathbbm{1}\left\{ \sum_{d=2}^{W} a_d (\Delta_i^x)_d > \frac{\tau}{4 } \log n \right\}$ for $x=1, 2$, we have
$$
\begin{aligned}
	\mathbb{P}\left( \sum_{i=1}^{n} \mathbbm{1} \{ i \text{ \ is bad} \} > \frac{\delta}{2} n \right) & \leq \mathbb{P}\left( \left( \sum_{i=1}^{n} \mathbb{I}_i^1 > \frac{\delta}{4} n \right) \cup \left( \sum_{i=1}^{n} \mathbb{I}_i^2 > \frac{\delta}{4} n \right) \right) \\
	& \leq \mathbb{P}\left( \sum_{i=1}^{n} \mathbb{I}_i^1 > \frac{\delta}{4} n \right) + \mathbb{P}\left( \sum_{i=1}^{n} \mathbb{I}_i^2 > \frac{\delta}{4} n \right).
\end{aligned}
$$
In the following, we will prove that $\mathbb{P}\left(\sum_{i=1}^{n} \mathbb{I}_i^1 > \frac{\delta}{4} n\right)  \leq O(e^{-(k+1))\delta n \log n})$, and the other term can be handled in a similar manner. For $\{ \mathcal{C}^z_k \}_{k \in [K]} \in \mathcal{Z}_\delta$, note that $h_d(\{i\}, \mathcal{C}_a \triangle \mathcal{C}_a^z)$ consists of at most $\binom{\delta n}{d -1}$ independent random variables that are either $\text{Bern}(\af)$ or $\text{Bern}(\bt)$ and $\af > \bt$. For any $d$, let $\{A_{di}\} \overset{\text{i.i.d}}{\sim} \text{Bern}(\alpha_d) $. Hence,
\begin{equation}\label{s65}
    \begin{aligned}
        \mathbb{P}\left( \sum_{d=2}^W a_d (\Delta_i^1)_d  > \frac{\tau}{4 } \log n \right) &\leq \mathbb{P}\left( \sum_{d=2}^W a_d h_d(\{i\}, \mathcal{C}_a \triangle \mathcal{C}_a^z) > \frac{\tau}{4 } \log n \right) \\
        &\leq \mathbb{P} \left( \sum_{d=2}^W a_d \sum_{i=1}^{\binom{\delta n}{d-1}} A_{di} > \frac{\tau}{4 } \log n \right) \\
        &\leq \mathbb{P} \left( \bigcup_{d=2}^W \left\{ a_d \sum_{i=1}^{\binom{\delta n}{d-1}} A_{di} > \frac{\tau}{4 (W-1) } \log n \right\} \right) \\
        & \overset{\text{(i)}}{\leq} \sum_{d=2}^W \mathbb{P} \Bigg( a_d \sum_{i=1}^{\binom{\delta n}{d-1}} A_{di} > \frac{\tau}{4 (W-1)  } \log n  \Bigg) 
    \end{aligned}
\end{equation}
where (i) follows from the fact that if $a_d \sum_{i=1}^{\binom{\delta n}{d-1}} A_{di} < \frac{\tau}{4 (W-1)  } \log n$ are true for all $d$, then $\sum_{d=2}^W a_d (\Delta_i^1)_d$ must less than $\frac{\tau}{4 } \log n$.

For any $2 \le d \le W$, a constant $l>0$ is chosen. By using Lemma~4 with $c=\max \left\{ 5e, l \cdot \frac{2 \log n}{n^{d-1} \alpha_d} \right\}$, we obtain
\begin{equation}\label{s66}
	\mathbb{P}\left( \sum_{i=1}^{\binom{\delta n}{d-1}} A_{di} \geq \frac{cn^{d-1}\alpha_d}{\log \frac{1}{\delta^{d-1}}} \right) \leq \mathbb{P}\left(  \sum_{i=1}^{(\delta n)^{d-1}} A_{di} \geq \frac{cn^{d-1}\alpha_d}{\log \frac{1}{\delta^{d-1}}}  \right) \leq  2\exp \left( -\frac{cn^{d-1}\alpha_d}{2} \right) \leq 2n^{-l}.
\end{equation}
Since $\lim_{\delta \rightarrow 0^+} \frac{1}{\log \frac{1}{\delta}} = 0$, there exists a sufficiently small $\delta' > 0$ such that whenever $\delta < \delta'$,
\begin{equation}\label{s67}
	\frac{cn^{d-1}\alpha_d}{\log \frac{1}{\delta^{d-1}}} \leq \frac{\tau}{4  (W-1) a_d } \log n.
\end{equation}
Thus, for $1 \leq i\leq n$ and $\delta < \delta'$,
\begin{equation}\label{s69}
    \begin{aligned}
        \mathbb{P}\left( \mathbb{I}^{1}_i = 1 \right) &= \mathbb{P}\left(  \sum_{d=2}^{W} a_d (\Delta_i^1)_d > \frac{\tau}{4 } \log n  \right) \leq \sum_{d=2}^W \mathbb{P} \Bigg( a_d\sum_{i=1}^{\binom{\delta n}{d-1}} A_{di} > \frac{\tau}{4 (W-1)  } \log n  \Bigg) \\
        & \leq \sum_{d=2}^W 2n^{-l} = 2(W-1)n^{-l}
    \end{aligned}
\end{equation}

By Chernoff-Hoeffding inequality \cite{thomas2006elements}, we get
\begin{equation}\label{s610}
	\mathbb{P}\left( \sum_{i=1}^{n} \mathbb{I}^1_i > \frac{\delta}{4} n  \right) \leq \exp \left( -n \text{D}_{\text{KL}}\left( \frac{\delta}{4} \bigg|\bigg| 2(W-1)n^{-l} \right)  \right).
\end{equation}
Then, by taking a sufficient large value of $l$, we have
\begin{equation}\label{s611}
	\begin{aligned}
		\text{D}_{\text{KL}} \left( \frac{\delta}{4} \bigg|\bigg| 2(W-1)n^{-l} \right) &= \frac{\delta}{4} \log \left( \frac{\frac{\delta}{4}}{2(W-1)n^{-l}} \right) + \left( 1-\frac{\delta}{4} \right) \log \left( \frac{1- \frac{\delta}{4}}{1-2(W-1)n^{-l}} \right) \\
		& \overset{\text{(i)}}{\geq} \frac{\delta}{4} \log \left( \frac{\frac{\delta}{4}}{2(W-1)n^{-l}} \right) + \log \left( 1-\frac{\delta}{4} \right) \\
		& \overset{\text{(ii)}}{=} \frac{\delta}{4} \log \left( \frac{\delta n^{l}}{8(W-1)} \right) - \frac{\delta}{4} + O(\delta^2) \\
		& \overset{\text{(iii)}}{\geq} \frac{\delta}{4} \cdot \{ (l-1)\cdot \log n - (l-1)\cdot \log 8(W-1) - 1 + O(\delta^2) \} \\
  &\geq (k+1)) \delta n \log n
	\end{aligned}
\end{equation} 
where (i) holds when $l>1$ and $2(W-1)n^{-l} \leq n^{-1} \leq \frac{\delta}{4}$; (ii) follows from the fact that $\log (1-x) = -x + O(x^2)$ as $x \rightarrow 0$; and (iii) follows since $\delta \geq n^{-1}$.

Thus, we obtain
$$
\mathbb{P}\left( \sum_{i=1}^{n} \mathbb{I}^1_i > \frac{\delta}{4} n \right) \leq \exp (-(k+1) \delta n \log n).
$$
    
\hfill $\square$\par

\paragraph{Proof of Lemma 7:} For simplicity, we define $\text{deg}(i)_d \triangleq h_d(\{i\}, [n] \setminus \{i\})$, and $\mathcal{U}(i) \triangleq \{ j \in [m]: (i,j) \in \mathcal{U} \}$. Also recall that  $a'_d \triangleq \log \left( \frac{\alpha'_d (1-\beta'_d)}{\beta'_d (1-\alpha'_d)} \right)$ and $b' \triangleq \log \left( \frac{1-\theta'}{\theta'} \right)$.  Note that
    \begin{equation}\label{s612}
    \begin{aligned}
    & |L'(i; \mathcal{C}^{(t)}_a, \mathcal{C}^{(t)}_{\Bar{a}}) - L(i; \mathcal{C}^{(t)}_a, \mathcal{C}^{(t)}_{\Bar{a}})| \\
    & \leq \sum_{d=2}^{W} | a_d - a'_d | \cdot \Big( h_d(\{i\}, \mathcal{C}^{(t)}_a) + h_d(\{i\}, \mathcal{C}^{(t)}_{\Bar{a}}) \Big)  + |b - b'| \cdot \Big( |\Lambda_i(v_{a})| + |\Lambda_i(v_{\Bar{a}})| \Big) \\
    &\leq \sum_{d=2}^{W} 2 | a_d - a'_d | \cdot  \text{deg}(i)_d  + 2|b - b' |\cdot |\mathcal{U}(i)|.
    \end{aligned}
    \end{equation}
For any $2 \le d\le W$, $\text{deg}(i)_d$ is dominated by $\sum_{i=1}^{\binom{n-1}{d-1}} A_{di}$, where $\{A_{di}\}\overset{\text{i.i.d}}{\sim}\text{Bern}(\af)$. Then, by applying a standard large deviation inequality (e.g., the Bernstein's inequality), we have that  for all $t > 0$,
	\begin{equation}\label{s613}
		\mathbb{P}(\text{deg}(i)_d > t) \leq \mathbb{P}\left( \sum_{i=1}^{\binom{n-1}{d-1}} A_{di} > t \right) \leq 2\exp \left( \frac{-\frac{1}{2} t^2}{\binom{n-1}{d-1} \af + t} \right).
	\end{equation}
	Since $\af = \Theta \left( (\log n)/\binom{n-1}{d -1} \right)$, by taking $t = c_1 \log n$ for a sufficiently large $c_1>0$, we obtain 
	$$
	\mathbb{P}\left(\text{deg}(i)_d > c_1 \log n\right) \leq o(n^{-1}).
	$$ 
	Then by taking a union bound over all the users, we can ensure that $\sum_{d=2}^{W} \text{deg}(i)_d \leq c_1 \log n$ for all $i \in [n]$ with high probability. 
	Similarly, due to the assumptions $p=\Theta\left( \frac{\log m}{n} + \frac{\log n}{m} \right)$ and $m=O(n)$, one can prove that there exists a positive constant $c_2$ such that $|\mathcal{U}(i)|\leq c_2 \log n$ holds for all  $i \in [n]$.
	
	Thus, Eqn.~\eqref{s612} is further upper-bounded by 
 $$
 2 \sum_{d=2}^{W} | a_d - a'_d | \cdot c_1 \log n + 2 |b - b' | \cdot  c_2 \log n.
 $$
 Then, it suffices to prove: 
 \begin{enumerate}[label=(\roman*)]
     \item For every $2 \le d \le W$, $| a_d - a'_d | \leq \frac{\tau}{8 (W-1) c_1}$ with high probability;
     \item $|b-b'| \leq \frac{\tau}{8c_2}$ with high probability.
 \end{enumerate}
  Since the proof of (ii) is similar to (i), we only give the proof of (i) here. Note that, for any $d$,
\begin{equation}\label{s614}
        \begin{aligned}
        &| a_d - a'_d | \\
        & \leq \left|\log \frac{\alpha_d'}{\af } \right| + \left|\log \frac{\beta_d'}{\bt } \right| + \left|\log \frac{1-\alpha_d'}{1-\af } \right| + \left|\log \frac{\beta_d'}{\bt } \right| \\
        & = \left| \log \left( 1+\frac{\alpha_d'-\af }{\af } \right) \right| + \left| \log \left( 1+\frac{\beta_d'-\bt }{\bt } \right) \right| + \left| \log \left( 1-\frac{\alpha_d' - \af  }{1-\af } \right) \right| + \left| \log \left( 1-\frac{\beta_d' - \bt }{1-\bt } \right) \right|.
    \end{aligned}
\end{equation}
Here, we introduce another lemma to complete the proof.
\begin{lemma}\label{lemma10}
		Let $\eta \triangleq \max_{k \in [K]} \frac{|\mathcal{C}^{(t)}_k \setminus \mathcal{C}_k|}{n}$. For a sufficiently small $\eta$, both $\left| \frac{\alpha_d'-\af }{\af } \right| = O(\eta)$ and $\left| \frac{\beta_d'-\bt }{\bt  } \right| = O(\eta)$ holds with high probability.
	\end{lemma}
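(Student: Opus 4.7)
I would prove this by analyzing the natural plug-in estimators
$$\alpha'_d = \frac{\sum_{k=1}^K h_d(\mathcal{C}^{(t)}_k,\mathcal{C}^{(t)}_k)}{K\binom{n/K}{d}}, \qquad \beta'_d = \frac{|\mathcal{H}_d|-\sum_{k=1}^K h_d(\mathcal{C}^{(t)}_k,\mathcal{C}^{(t)}_k)}{\binom{n}{d}-K\binom{n/K}{d}},$$
and splitting the error into a bias and a fluctuation term. The argument for $\beta'_d$ is symmetric, so I would focus on $\alpha'_d$.

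For the bias, decompose each estimated cluster as $\mathcal{C}^{(t)}_k = S_k \sqcup T_k$ where $S_k=\mathcal{C}^{(t)}_k\cap\mathcal{C}_k$ and $T_k=\mathcal{C}^{(t)}_k\setminus\mathcal{C}_k$, with $|T_k|\le \eta n$ by definition of $\eta$. Each $d$-tuple inside $\mathcal{C}^{(t)}_k$ is either (a) fully contained in some single true cluster, hence an independent Bern$(\alf)$, or (b) split across multiple true clusters, hence an independent Bern$(\bt)$. A short combinatorial count — writing the number of type-(a) tuples as $\binom{|S_k|}{d} + \sum_{j\ne k}\binom{|T_k\cap\mathcal{C}_j|}{d}$ and comparing to $\binom{n/K}{d}$ — shows that the fraction of type-(b) tuples is $O(\eta)$ once $\eta$ is small compared with $1/K$. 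Therefore $\mathbb{E}[\alpha'_d]$ is a convex combination of $\af$ and $\bt$ putting weight $1-O(\eta)$ on $\af$, and since $\bt \le \af$ this gives $|\mathbb{E}[\alpha'_d]-\af|=O(\eta\,\af)$, the required multiplicative bias bound.

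For the fluctuation I would compare $\alpha'_d$ with the \emph{oracle} estimator $\alpha''_d \triangleq \sum_k h_d(\mathcal{C}_k,\mathcal{C}_k)/(K\binom{n/K}{d})$ built from the true clustering. A standard Chernoff/Bernstein bound gives $|\alpha''_d - \af|/\af = o(1)$ with high probability, since $\mathbb{E}[\alpha''_d\cdot K\binom{n/K}{d}]=\Theta(n\log n)$ in the sparse regime $\af=\Theta(\log n/n^{d-1})$. The difference $\alpha'_d-\alpha''_d$ is a signed sum of Bernoullis over the at most $O(\eta n^d)$ $d$-tuples whose intra/inter status switches between the two partitions; its variance is $O(\eta n^d \af)=O(\eta n\log n)$, so its standard deviation divided by $\af\cdot K\binom{n/K}{d}$ is $o(\eta)$ whenever $\eta\gg 1/(n\log n)$, which is mild. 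Combining bias and fluctuation yields $|\alpha'_d-\af|/\af=O(\eta)$ as claimed.

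The main obstacle is the data-dependence of $\mathcal{C}^{(t)}$: a priori the Bernoullis underlying $\alpha'_d$ are not independent of the partition they are aggregated by. Comparing to the oracle $\alpha''_d$ is designed to avoid this, because the oracle depends only on the true $\{\mathcal{C}_k\}$, and the bookkeeping of the $O(\eta n^d)$ changed tuples is purely combinatorial given $\mathcal{C}^{(t)}$. If additional uniformity is needed (for instance when $\mathcal{C}^{(t)}$ varies over a neighborhood of the truth), one would instead prove the estimates for every partition in a slight enlargement of $\mathcal{Z}_\eta$ and invoke the same decoupling-plus-union-bound device already used in the proof of Lemma~6.
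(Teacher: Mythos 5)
Your proof is correct and its core is the same as the paper's: both arguments classify the $d$-tuples inside each estimated cluster $\mathcal{C}^{(t)}_k$ according to whether they are truly intra-cluster (Bern$(\af)$) or split across true clusters (Bern$(\bt)$), show by a combinatorial count that the split tuples make up an $O(\eta)$ fraction of $\binom{n/K}{d}$ (this is exactly the paper's quantity $\gamma$ in Eqn.~\eqref{ab1}--\eqref{ab2}), and then control the two sub-sums by Chernoff--Hoeffding and Bernstein respectively. Where you differ is in the organization of the fluctuation term: the paper centers $h_d(\mathcal{C}^{(t)}_k,\mathcal{C}^{(t)}_k)$ directly around $\af$ and tunes the Chernoff deviation parameter to $l=K\gamma/(1-\gamma)$ so that the fluctuation is absorbed into the $O(\eta)\af$ bias, whereas you pass through the oracle estimator $\alpha''_d$ built from the true partition and separately bound the signed sum over the $O(\eta n^d)$ tuples whose status changes. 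The two routes give the same bound, but your detour through the oracle has a real payoff: it isolates the fact that $\mathcal{C}^{(t)}$ is data-dependent, so the Bernoulli variables in Eqn.~\eqref{ab1} are not independent of the partition used to aggregate them. The paper's proof silently treats the partition as fixed and never addresses this; your suggestion to either compare against the partition-independent oracle or to union-bound over a neighborhood of the truth (as in the proof of Lemma~6) is the honest way to close that gap, and is the one genuine improvement your write-up offers over the published argument.
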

        \begin{proof}
            We provide the proof after finishing the proofs of Lemmas~\ref{lemma8} and~\ref{lemma9}.
        \end{proof}
	According to Lemma~\ref{lemma10}, Eqn.~\eqref{s614} is upper-bounded by $O(\eta)$. Since Stage 1 guarantees $\eta = O(1)$, the proof of Lemma~\ref{lemma7} is completed.

\hfill $\square$\par

\paragraph{Proof of Lemma 8:}
For any hypergraph $HG_d$, let $\{A_{di}\}_i \overset{\text{i.i.d}}{\sim} \text{Bern}(\alpha_d) $, $\{B_{di}\}_i \overset{\text{i.i.d}}{\sim} \text{Bern}(\beta_d)$  be the set of Bernoulli variables. Then, we denote the set of nodes that are deleted in step (ii) by $\mathcal{F}$. Since $\af \leq \bt$, we have 
$$|\mathcal{F}| \leq \sum_{d=2}^W  d \sum_{i=1}^{\binom{4r}{d}} A_{di}.$$
Hence, by the Markov’s inequality,
\begin{equation}\label{36}
\begin{aligned}
    \mathbb{P} \left( |\mathcal{U}_d| \geq 3r \right) &= 1- \mathbb{P} \left( |\mathcal{F}| \geq r \right) \geq 1- \frac{\mathbb{E}[|\mathcal{F}|]}{r} \geq 1- \frac{\mathbb{E}[ \sum_{d=2}^W d\sum_{i=1}^{\binom{4r}{d}} A_{di}]}{r} \\
    &= 1- \frac{\sum_{d=2}^W d\binom{4r}{d} \alpha_d}{\frac{n}{\log^3 n}} = 1-o(1).
\end{aligned}
\end{equation}
This completes the proof.
\hfill $\square$\par

 \paragraph{Proof of Lemma 9:} 
To prove Lemma~\ref{lemma9}, it suffices to show that 
$$L((R'^{(i_1)})^{(i_2)}) - L(R') \geq L(R'^{(i_1)}) - L(R') + L(R'^{(i_2)}) - L(R'). $$
Let $\mathcal{C}_1^{i_1 i_2}$ represent the cluster that is identical to cluster $\mathcal{C}_1$ except that user $i_1$ (which belongs to $\mathcal{C}_1$) is removed  while user $i_2$ (which belongs to $\mathcal{C}_2$) is added. Similarly,  let $\mathcal{C}_2^{i_1 i_2}$ represent the cluster that is identical to cluster $\mathcal{C}_2$ except that user $i_2$ is removed while user $i_1$ is added.  By Lemma 1, we have
    \begin{equation}
        \begin{aligned}
            &L\Big((R'^{(i_1)})^{(i_2)}\Big) - L(R') \\
            &= \sum_{d=2}^W a_d \Big( h_d( \mathcal{C}_1^{i_1i_2},  \mathcal{C}_1^{i_1i_2}) - h_d( \mathcal{C}_1, \mathcal{C}_1 ) + h_d( \mathcal{C}_2^{i_1i_2},  \mathcal{C}_2^{i_1i_2}) - h_d( \mathcal{C}_2, \mathcal{C}_2 ) \Big) + b ( |\Lambda_{(R'^{(i_1)})^{(i_2)}}| - |\Lambda_{R'}| ).
        \end{aligned}
    \end{equation}
    
    By the definition of $|\Lambda_{X}|$, we have
	\begin{equation}\label{p6_2}
		|\Lambda_{(R'^{(i)})^{(j)}}| - |\Lambda_{R'}| = \left(|\Lambda_{R'^{(i)}}| - |\Lambda_{R'}|\right) + \left(|\Lambda_{R'^{(j)}}| - |\Lambda_{R'}|\right).
	\end{equation}

Let $\mathcal{C}_1^{i_1}$ be the cluster that is identical to $\mathcal{C}_1$ except that user $i_1$ is removed, and $\mathcal{C}_2^{i_1}$ be the cluster that is identical to $\mathcal{C}_2$ except that user $i_1$ is added. The clusters $\mathcal{C}_2^{i_1}$ and $\mathcal{C}_2^{i_2}$ are defined similarly. 
 Note that
    $$
    \begin{aligned}
        &h_d( \mathcal{C}_1^{i_1i_2},  \mathcal{C}_1^{i_1i_2}) = h_d( \mathcal{C}_1^{i_1},  \mathcal{C}_1^{i_1}) + h_d( \{ i_2 \}, \mathcal{C}_1^{i_1} ) , \\
        &h_d( \mathcal{C}_2^{i_1i_2},  \mathcal{C}_2^{i_1i_2}) = h_d( \mathcal{C}_1^{i_1},  \mathcal{C}_1^{i_1}) + h_d( \{ i_1 \}, \mathcal{C}_2^{i_1} ) , \\
        &L (R'^{(i_1)}) - L(R') = \sum_{d=2}^W a_d \Big( h_d( \mathcal{C}_1^{i_1},  \mathcal{C}_1^{i_1}) - h_d( \mathcal{C}_1, \mathcal{C}_1 ) \Big) + b ( |\Lambda_{R'^{(i_1)}}| - |\Lambda_{R'}| ), \\
        & L (R'^{(i_2)}) - L(R') = \sum_{d=2}^W a_d \Big( h_d( \mathcal{C}_2^{i_2},  \mathcal{C}_2^{i_2}) - h_d( \mathcal{C}_2, \mathcal{C}_2 ) \Big) + b ( |\Lambda_{R'^{(i_2)}}| - |\Lambda_{R'}| ).
    \end{aligned}
    $$
    Hence, we have
    $$
    \begin{aligned}
            &L((R'^{(i_1)})^{(i_2)}) - L(R') -\Big( L(R'^{(i_1)}) - L(R') + L(R'^{(i_2)}) - L(R') \Big) \\
            &= \sum_{d=2}^W a_d \Big( h_d( \{ i_2 \}, \mathcal{C}_1^{i_1} ) + h_d( \{ i_1 \}, \mathcal{C}_2^{i_2} ) \Big) > 0.
    \end{aligned}
    $$
 This completes the proof.
  \hfill $\square$\par

 \paragraph{Proof of Lemma 10:} 
    Here, we will only prove $\left| \frac{\alpha_d'-\af }{\af } \right| = O(\eta)$, since the proof of $\left| \frac{\beta_d'-\bt }{\bt } \right| = O(\eta)$ follows similarly. Note that
\begin{equation}\label{ab1}
	h_d(\mathcal{C}_k^{(t)}, \mathcal{C}_k^{(t)}) = \sum_{i=1}^{\binom{(1/K-\eta)n}{d}+\binom{\eta n}{d}} A_i + \sum_{i=1}^{\binom{n/K}{d}-\binom{(1/K-\eta)n}{d}-\binom{\eta n}{d}} B_i,
\end{equation}
where $\{A_i\} \overset{\text{i.i.d}}{\sim} \text{Bern}(\af)$ and $\{B_i\} \overset{\text{i.i.d}}{\sim} \text{Bern}(\bt)$. We define $\gamma \triangleq \frac{\binom{n/K}{d}-\binom{(1/K-\eta)n}{d}-\binom{\eta n}{d}}{\binom{n/K}{d}}$ for simplicity, and one can show that $\gamma = O(\eta)$. Then, by the triangle inequality, we have
\begin{equation}\label{ab2}
	\left| \frac{1}{K} \af - \frac{h_d(\mathcal{C}_k^{(t)}, \mathcal{C}_k^{(t)})}{K\binom{n/K}{d}}  \right| \leq \left| \frac{(1-\gamma)}{K} \af - \frac{1}{K\binom{n/K}{d}} \sum_{i=1}^{(1-\gamma)\binom{n/K}{d}} A_i \right| + \frac{\gamma}{K}\af + \frac{1}{K\binom{n/K}{d}} \sum_{i=1}^{\gamma \binom{n/K}{d}} B_i.
\end{equation}

By Chernoff-Hoeffding inequality, for any $l>0$, we have
\begin{equation}\label{ab3}
    \begin{aligned}
        &\mathbb{P} \left( \left| \frac{(1-\gamma)}{K} \af - \frac{1}{K\binom{n/K}{d}} \sum_{i=1}^{(1-\gamma)\binom{n/K}{d}} A_i \right| \geq  \frac{(1- \gamma) }{K} l \af \right) \\
        &\qquad  \qquad \qquad \qquad \qquad \qquad \le \exp \left( -(1-\gamma) \binom{n/K}{d} D_{KL} \left(\af +\gamma \af \bigg| \bigg| \af\right) \right).
    \end{aligned}	
\end{equation}
Since $ D_{KL} \left(\af +\gamma \af \bigg| \bigg| \af\right) = \Theta(l \af)$ and $\af = \Theta \left( \frac{\log n}{\binom{n-1}{d-1}} \right)$, Eqn.~\eqref{ab3} is upper-bounded by
\begin{equation}\label{ab4}
    \begin{aligned}
        &\mathbb{P} \left( \left| \frac{(1-\gamma)}{K} \af - \frac{1}{K\binom{n/K}{d}} \sum_{i=1}^{(1-\gamma)\binom{n/K}{d}} A_i \right| \geq  \frac{1- \gamma }{K} l \af \right) \\
        &\qquad \qquad \qquad \qquad \qquad \qquad \qquad \qquad  \le \exp \left( -(1-\gamma) \binom{n/K}{d} l  \frac{\log n}{\binom{n-1}{d-1}} \right)  = o(n^{-1}).
    \end{aligned}	
\end{equation}
Setting $l=\frac{K \gamma}{1-\gamma}$, we have
\begin{equation}\label{ab5}
	\mathbb{P} \left( \left| \frac{(1-\gamma)}{K} \af - \frac{1}{K\binom{n/K}{d}} \sum_{i=1}^{(1-\gamma)\binom{n/K}{d}} A_i \right|  \leq \gamma \af \right) = 1- o(1).
\end{equation}
By applying Bernstein's inequality, the following statement holds with high probability,
\begin{equation}\label{ab6}
	\frac{1}{K\binom{n/K}{d}} \sum_{i=1}^{\gamma \binom{n/K}{d}} B_i \leq K \bt = O(\af).
\end{equation}
Substitute \eqref{ab5} and \eqref{ab6} into \eqref{ab2}, we obtain $\left| \frac{1}{K} \af - \frac{h_d(\mathcal{C}_k^{(t)}, \mathcal{C}_k^{(t)})}{K\binom{n/K}{d}}  \right| = O(\eta) \af$.  Then we have  
$$
\left| \af - \alpha_d' \right| \leq \sum_{k \in [K]} \left| \frac{1}{K} \af - \frac{h_d(\mathcal{C}_k^{(t)}, \mathcal{C}_k^{(t)})}{K\binom{n/K}{d}}  \right|   = O(\eta) \af,
$$
which completes the proof of $\left| \frac{\alpha_d'-\af }{\af } \right| = O(\eta)$.
 \hfill $\square$\par

\section{Additional Experiments}\label{experiments}

\subsection{Incorporate Hyperedges to Graph-based Methods}
To further explore the MCH algorithm's ability to utilize hyperedge information, we conduct additional experiments. Specifically, we employ clique expansion to convert hyperedges into fully connected edges (thereby transforming hypergraphs into graphs), enabling the utilization of additional hyperedge information in graph-based methods. The results in Figure~\ref{fig_4} indicate that incorporating hyperedge information does enhance the performance of graph-based baselines. However, the proposed MCH algorithm continues to outperform them due to its ability to fully utilize the hypergraph information.

\subsection{Running-Time Comparison}
We compare the running times of the selected $8$ algorithms, as shown in Table~\ref{Tab1}. Our proposed MCH exihibits high efficiency. Note that all the experiments  are performed with the same hardware setup, including a 6-core i7 9750H CPU and 16GB of memory.

\begin{figure}[h]
    \centering
    \begin{subfigure}[b]{0.36\linewidth}
        \includegraphics[width=\linewidth]{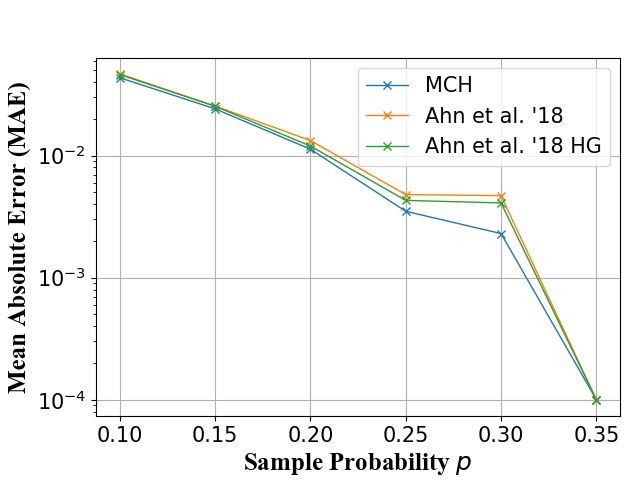}
        \caption{Semi-real data}
    \end{subfigure}
    \hspace{1em}
     \begin{subfigure}[b]{0.36\linewidth}
        \includegraphics[width=\linewidth]{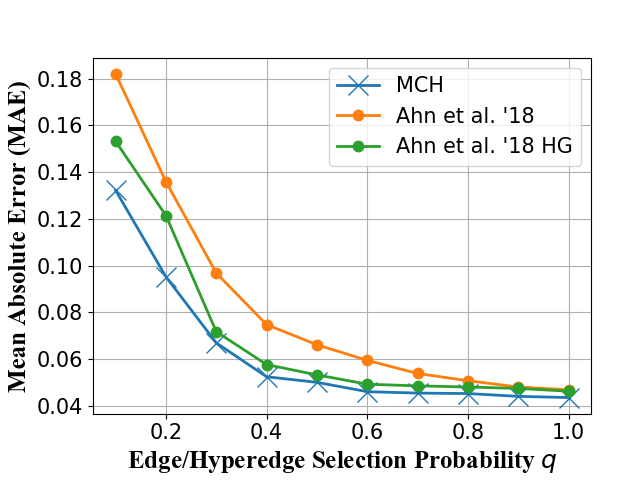}
        \caption{Modified semi-real data (fix $p$ = 0.1)}
    \end{subfigure}
    \vspace{-0.5em}
     \begin{subfigure}[b]{0.36\linewidth}
        \includegraphics[width=\linewidth]{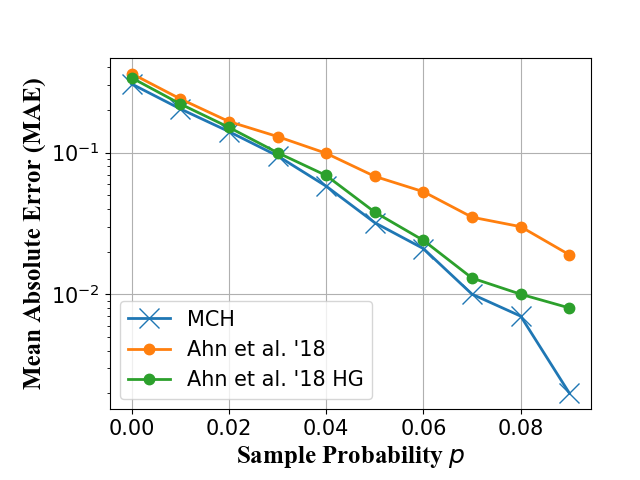}
        \caption{Synthetic data ($n=300$, $m=100$)}
    \end{subfigure}
    \caption{Comparative Experiments: Hyperedge Inclusion (Ahn et al. HG) vs. Exclusion (Ahn et al.) on the synthetic dataset and semi-real dataset.}
    \label{fig_4}
\end{figure}

\begin{table}[h]
  \centering
  \caption{Comparison of runng times for different algorithms. Each algorithm undergoes 10 iterations.}
  \label{Tab1}
  \begin{tabular}{c | c c c c c}
    \hline
    Methods & GraphRec & socialMF & soialReg & svd++ & user k-NN \\
    \hline
    Running Time (/s) & 291.93 & 301.33 & 33.28 & 22.52 & 3.15 \\
    \hline
    Methods & item k-NN & Ahn et al. '18 & MCH \\
    \hline
    Running Time (/s) & 3.45 & 5.3 & 5.7 \\
    \hline
  \end{tabular}
\end{table}

\section{A detailed comparison with~\cite{ahn2018binary}}\label{compare}
Ref.~\cite{ahn2018binary} was the first to theoretically investigate how much gain graph information can provide for matrix completion problems. Our work is inspired by this study, with the main influences reflected in the following aspects:  
\begin{enumerate}
    \item \textbf{Framework Adoption:} We adopted the same research framework as ~\cite{ahn2018binary}, focusing on the SBM model for matrix completion problems involving binary ratings. 
    \item \textbf{Algorithm Design:} Drawing inspiration from ~\cite{ahn2018binary}, we also employ a similar three-stage algorithm.
\end{enumerate} 
However, our work introduces the following three key improvements:  
\begin{enumerate}
    \item \textbf{Theoretical Assumptions:} While ~\cite{ahn2018binary} considers the case of only two symmetric culsters, we extend this assumption to multiple symmetric culsters. Furthermore, our proposed MCH algorithm is capable of handling multiple asymmetric culsters in practical applications, addressing the most general scenarios.
    \item \textbf{Problem Setting:} The primary distinction lies in the problem setting. We consider the presence of multiple social \textit{hypergraphs}, whereas ~\cite{ahn2018binary} only addresses a single social graph. In addition, our treatment of multiple uniform hypergraphs can be viewed as a non-uniform hypergraph. This extension necessitates solving more complex combinatorial problems and deriving tighter bounds to achieve sharp thresholds—an inherently \textit{non-trivial} challenge.
    \item \textbf{Experiments and Applications:} The MCH algorithm is designed to handle hypergraph social information, while the algorithm in ~\cite{ahn2018binary} reduces hypergraph information to graph information for processing. To demonstrate the benefits of leveraging hypergraph information, we conducted experiments, including those shown in Figures~\ref{R2} and the three experiments in Figure~\ref{fig_4}. These experiments confirm that using hypergraph information provides additional gains, validating the practical significance of our approach.  
\end{enumerate}

\medskip

\end{document}